\definecolor{bl}{RGB}{20,20,200}
\newtheorem{theorem}{Theorem}
\newtheorem{lemma}[theorem]{Lemma}
\newtheorem{corollary}[theorem]{Corollary}
\newtheorem{proposition}[theorem]{Proposition}
\theoremstyle{remark}
\theoremstyle{definition}
\newtheorem{example}[theorem]{Example}
\newtheorem{definition}[theorem]{Definition}
\newcommand{\conv}{\ensuremath{\operatorname{conv}}}
\newcommand{\Acal}{\mathcal{A}}
\newcommand{\Lcal}{\mathcal{L}}
\newcommand{\Ecal}{\mathcal{E}}
\newcommand{\Mcal}{\mathcal{M}}
\newcommand{\Pcal}{\mathcal{P}}
\newcommand{\Xcal}{\mathcal{X}}
\newcommand{\Ycal}{\mathcal{Y}}
\newcommand{\Ccal}{\mathcal{C}}
\newcommand{\Zcal}{\mathcal{Z}}
\newcommand{\Fcal}{\mathcal{F}}
\newcommand{\R}{\mathbb{R}}
\newcommand{\Amd}{\mathfrak{A}}
\newcommand{\Kmd}{\mathfrak{K}}
\newcommand{\supp}{\operatorname{supp}}
\newcommand{\TRBM}{\ensuremath{\operatorname{RBM}^{\text{\normalfont tropical}}}}
\newcommand{\RBM}{\ensuremath{\operatorname{RBM}}}
\newcommand{\be}{\boldsymbol{\operatorname{e}}}
\begin{document}

\thispagestyle{empty}
\title{\Large{\bf Discrete Restricted Boltzmann Machines}}  

\author{ 
\vspace{.3cm}
\small{\bf Guido Mont\'ufar} \hfill \small{\textsc{gfm10@psu.edu}} \\
\vspace{-.2cm}
\small{\bf Jason Morton} \hfill \small{\textsc{morton@math.psu.edu}} \\
\vspace{.4cm}
\small{\emph{Department of Mathematics}}\hfill \mbox{ }\\
\small{\emph{Pennsylvania State University}}\hfill\mbox{ }\\ 
\small{\emph{University Park, PA 16802, USA}}\hfill\mbox{ }
\vspace{.4cm}
}

\date{ }

\maketitle

\begin{abstract}%
\noindent We describe discrete restricted Boltzmann machines: probabilistic graphical models with bipartite interactions between visible and hidden discrete variables. 
Examples are binary restricted Boltzmann machines and discrete na\"ive Bayes models. 
We detail the inference functions and distributed representations arising in these models in terms of configurations of projected products of simplices and normal fans of products of simplices. 
We bound the number of hidden variables, depending on the cardinalities of their state spaces, for which these models can approximate any probability distribution on their visible states to any given accuracy. 
In addition, we use algebraic methods and coding theory to compute their dimension. 

\medskip

\noindent{\bf Keywords:}\/ 
Restricted Boltzmann Machine, Na\"ive Bayes Model, Representational Power, Distributed Representation, Expected Dimension 
\end{abstract}

\section{Introduction}

A restricted Boltzmann machine (RBM) is a probabilistic graphical model with bipartite interactions between an observed set and a hidden set of units~\citep[see][]{Smolensky1986,Freund1992,Hinton2002,Hinton2010}. 
A characterizing property of these models is that the 
observed units are independent given the states of the hidden units and vice versa. 
This is a consequence of the bipartiteness of the interaction graph and does not depend on the units' state spaces. 
Typically RBMs are defined with binary units, 
but other types of units have also been considered, including continuous, discrete, and mixed type units~\citep[see][]{welling:exponential,Marks01diffusionnetworks,Salakhutdinov:2007,wordrbm,mixvarrbm}. 
We study discrete RBMs, also called multinomial or softmax RBMs, which are special types of exponential family harmoniums~\citep[][]{welling:exponential}. 
While each unit $X_i$ of a binary RBM has the state space $\{0,1\}$, the  state space of each unit $X_i$ of a discrete RBM is a finite set $\Xcal_i=\{0,1,\ldots, r_i-1\}$. 
Like binary RBMs, discrete RBMs can be trained using contrastive divergence (CD)~\citep{Hinton99productsof,Hinton2002,Carreira2005} or expectation-maximization (EM)~\citep{dempsterEM} and can be used to train the parameters of deep systems layer by layer~\citep{Hinton2006,Bengio2007}. 

Non-binary visible units are natural because they can directly encode non-binary features. 
The situation with hidden units is more subtle. 
States that appear in different hidden units can be activated by the same visible vector, but states that appear in the same hidden unit are mutually exclusive. 
Non-binary hidden units thus allow one to explicitly represent complex exclusive relationships. 
For example, a discrete RBM topic model would allow some topics 
to be mutually exclusive and other topics to be mixed together freely. 
This provides a better match to the semantics of several learning problems, although the learnability of such representations is mostly open. 
The practical need to represent mutually exclusive properties is evidenced by the common approach of adding activation sparsity parameters to binary RBM hidden states, which artificially create mutually exclusive non-binary states by penalizing models which have more than a certain percentage of hidden units active. 

A discrete RBM is a {\em product of experts}~\citep{Hinton99productsof}; each hidden unit represents an expert which is a mixture model of product distributions, or na\"ive Bayes model. 
Hence discrete RBMs capture both na\"ive Bayes models and binary RBMs,  
and interpolate between non-distributed 
mixture representations 
and distributed 
mixture representations~\citep{Bengio-2009, MontufarMorton2012}. 
See Figure~\ref{RBMhierarchyfig}. 
\begin{figure}
\setlength{\unitlength}{14cm}
\centering
\begin{picture}(.9,.18)(0,0)
\put(.04,0){\includegraphics[trim=3.7cm 21cm 14.5cm 2cm, clip=true, scale=1.2]{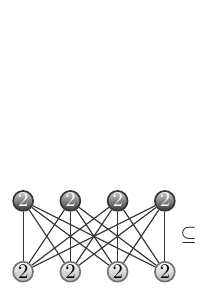}}
\put(.33,0){\includegraphics[trim=10.4cm 21cm 4.7cm 2cm, clip=true, scale=1.2]{RBMhierarchy2.pdf}}
\put(.058,.17){\begin{minipage}{.24\unitlength}\center $24$\end{minipage}}
\put(.363,.17){\begin{minipage}{.2\unitlength}\center $40$\end{minipage}}
\put(.645,.17){\begin{minipage}{.2\unitlength}\center $79$\end{minipage}}
\end{picture}
\caption{Examples of probability models treated in this paper, in the special case of binary visible variables. 
The light (dark) nodes represent visible (hidden) variables with the indicated number of states. 
The total parameter count of each model is indicated at the top. 
From left to right: a binary RBM; 
a discrete RBM with one $8$-valued and one binary hidden units; and a binary na\"ive Bayes model with $16$ hidden classes. 
}
\label{RBMhierarchyfig}
\end{figure}
Na\"ive Bayes models have been studied across many disciplines. In machine learning they are most commonly used for classification and clustering, but have also been considered for probabilistic modelling~\citep{Lowd05naivebayes,Montufar2010a}. 
Theoretical work on binary RBM models includes results on universal approximation~\citep{Freund1992, LeRoux2008,  Montufar2011}, dimension and parameter identifiability~\citep{Cueto2010}, Bayesian learning coefficients~\citep{aoyagi:2010}, complexity~\citep{LongServedio10}, and approximation errors~\citep{NIPS2011_0307}. 
In this paper we generalize some of these theoretical results to discrete RBMs.

Probability models with more general interactions than strictly bipartite have also been considered, including semi-restricted Boltzmann machines and higher-order interaction Boltzmann machines~\citep[see][]{Sejnowski86higher-orderboltzmann,Memisevic2010,DBLP:conf/nips/OsinderoH07,RanzatoKH10}. The techniques that we develop in this paper also serve to treat a general class of RBM-like models allowing within-layer interactions, a generalization that will be carried out in a forthcoming work~\citep{MontufarMorton2013b}. 

\medskip 

Section~\ref{section:preliminaries} collects basic facts about independence models, na\"ive Bayes models, and binary RBMs, including an overview on the aforementioned theoretical results. 
Section~\ref{section:multRBM} defines discrete RBMs formally 
and describes them as (i) products of mixtures of product distributions (Proposition~\ref{prodmixt}) and (ii) as restricted mixtures of product distributions. Section~\ref{section:inference} elaborates on distributed representations and inference functions represented by discrete RBMs (Proposition~\ref{paralelslicings}, Lemma~\ref{strmod1}, and Proposition~\ref{strongmodepro}). 
Section~\ref{section:expressive} addresses the expressive power of discrete RBMs by describing explicit submodels (Theorem~\ref{corpropouniv}) and provides results on their maximal approximation errors and universal approximation properties (Theorem~\ref{approxerrordiscreteRBM}).  
Section~\ref{section:algebraic} treats the dimension of discrete RBM models (Proposition~\ref{remdimu} and  Theorem~\ref{theorem:dimension}). Section~\ref{section:tropical model} contains an algebraic-combinatorial discussion of tropical discrete RBM models (Theorem~\ref{tropicalRBM}) with consequences for their dimension collected in Propositions~\ref{corbinhi},~\ref{binhidcor}, and~\ref{generalcasecor}.

\section{Preliminaries}\label{section:preliminaries}

\subsection{Independence models}
Consider a system of $n<\infty$  random variables $X_1,\ldots,X_n$. Assume that $X_i$ takes states $x_i$ in a finite set $\Xcal_i=\{0,1,\ldots,r_i-1\}$  for all $i\in\{1,\ldots,n\}=:[n]$. The state space of this system is $\Xcal:=\Xcal_1\times\cdots\times\Xcal_n$. We write $x_\lambda=(x_i)_{i\in\lambda}$ for a joint state of the variables with index $i\in\lambda$ for any $\lambda\subseteq[n]$, and  $x=(x_1,\ldots,x_n)$ for a joint state of all variables. 
We denote by $\Delta(\Xcal)$ the set of all probability distributions on $\Xcal$. 
We write $\langle a ,b\rangle$ for the inner product $a^\top b$. 

The {\em independence model} of the variables $X_1,\ldots, X_n$ is the set of product distributions $p(x) = \prod_{i\in[n]} p_i(x_i)$ for all $x\in\Xcal$, where $p_i$ is a probability distribution with state space $\Xcal_i$ for all $i\in[n]$. This model is the closure $\overline{\Ecal_\Xcal}$ (in the Euclidean topology) of the exponential family 
\begin{equation}
\mathcal{E}_\Xcal := \Big\{ \frac{1}{Z(\theta)} \exp(\langle\theta,A^{(\Xcal)}\rangle) \colon \theta\in\R^{d_\Xcal}\Big\} ,
\end{equation}
where $A^{(\mathcal{X})}\in\R^{d_\Xcal\times\Xcal}$ is a matrix of sufficient statistics; with rows equal to the indicator functions $\mathds{1}_\Xcal$ and $\mathds{1}_{\{x\colon x_i=y_i\}}$ for all $y_i\in\Xcal_i\setminus\{0\}$ for all $i\in[n]$. The partition function $Z(\theta)$ normalizes the distributions. 
The convex support of $\Ecal_\Xcal$ is the convex hull $Q_\Xcal:=\conv(\{A^{(\Xcal)}_x\}_{x\in\Xcal})$ of the columns of $A^{(\Xcal)}$, which is a Cartesian product of simplices with $Q_\Xcal \cong \Delta(\Xcal_1)\times \cdots\times \Delta(\Xcal_n)$. 

\begin{figure}
\setlength{\unitlength}{\textwidth}
\begin{center}
\begin{picture}(.65,.26)(0,.02)
\put(0,0){\includegraphics[trim=4cm 20cm 9cm 2cm, clip=true, width=.6\unitlength]{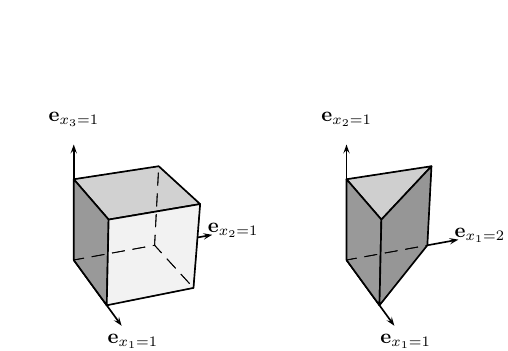}}
\end{picture}
\end{center}
\caption{The convex support of the independence model of three binary variables (left) and  of a binary-ternary pair of variables (right) discussed in Example~\ref{exampleindepcs}. 
}
\label{Boxesfig}
\end{figure}

\begin{example}\label{exampleindepcs}
\begin{normalfont}
The sufficient statistics of the independence models $\Ecal_\Xcal$ and $\Ecal_{\Xcal'}$ with 
state spaces $\Xcal=\{0,1\}^3$ and $\Xcal'=\{0,1,2\}\times\{0,1\}$ are, with rows labeled by indicator functions, 
\newcommand{\bm}{\!\!\!\!\begin{bmatrix}}
\newcommand{\enm}{\end{bmatrix}\!\!\!\!}
\newcolumntype{T}{>{\scriptsize$}c<{$}}
\begin{equation*}
A^{(\Xcal)}=
\overset{
\renewcommand{\arraystretch}{0.5}
\begin{tabular}{T T T T T T T T }
\renewcommand{\arraystretch}{.7}
\bm 1\\1\\1 \enm &
\renewcommand{\arraystretch}{.7}
\bm 1\\1\\0 \enm &
\renewcommand{\arraystretch}{.7}
\bm 1\\0\\1 \enm &
\renewcommand{\arraystretch}{.7}
\bm 1\\0\\0 \enm &
\renewcommand{\arraystretch}{.7}
\bm 0\\1\\1 \enm &
\renewcommand{\arraystretch}{.7}
\bm 0\\1\\0 \enm &
\renewcommand{\arraystretch}{.7}
\bm 0\\0\\1 \enm &
\renewcommand{\arraystretch}{.7}
\bm 0\\0\\0 \enm \\ 
& & & & & & &
\end{tabular}
}{
\renewcommand{\arraystretch}{.8}
\left(\begin{array}{c c c c c c c c}
1&1&1&1&1&1&1&1\\\midrule[0.02em]
1&1&1&1&0&0&0&0\\
1&1&0&0&1&1&0&0\\ 
1&0&1&0&1&0&1&0\\ 
\end{array}\right)
}\!\!
\overset{
\renewcommand{\arraystretch}{0.5}
\begin{tabular}{T}
\renewcommand{\arraystretch}{0.75}
\phantom{\bm 1\\1\\1 \enm}\\
~\\ 
\end{tabular}
}{
\renewcommand{\arraystretch}{.8}
\begin{array}{l}
~ \\\midrule[0em]
x_3=1\\ 
x_2=1\\ 
x_1=1
\end{array}
}
\;\quad 
A^{(\Xcal')}=
\overset{
\renewcommand{\arraystretch}{0.5}
\begin{tabular}{T T T T T T}
\renewcommand{\arraystretch}{0.7}
\bm 1\\2 \enm &
\renewcommand{\arraystretch}{0.7}
\bm 1\\1 \enm &
\renewcommand{\arraystretch}{0.7}
\bm 1\\0 \enm &
\renewcommand{\arraystretch}{0.7}
\bm 0\\2 \enm &
\renewcommand{\arraystretch}{0.7}
\bm 0\\1 \enm &
\renewcommand{\arraystretch}{0.7}
\bm 0\\0 \enm\\ 
& & & & & 
\end{tabular}
}{
\renewcommand{\arraystretch}{.8}
\left(\begin{array}{c c c | c c c}
 1 &1 &1 & 1&1 &1 \\\midrule[0.02em]
 1 &1 &1 &0 &0 &0 \\\midrule[0.02em]
 1 &0 &0 &1 &0 &0 \\ 
 0 &1 &0 &0 &1 &0 
\end{array}
\right)
}\!\!
\overset{
\renewcommand{\arraystretch}{0.5}
\begin{tabular}{T}
\renewcommand{\arraystretch}{0.75}
\phantom{\bm 1\\1 \enm}\\
~ \\ 
\end{tabular}
}{
\renewcommand{\arraystretch}{.8}
\begin{array}{l}
~ \\\midrule[0em]
x_2=1\\\midrule[0em]
x_1=2\\ 
x_1=1
\end{array}
}. 
\end{equation*}
In the first case the convex support is a cube and in the second it is a prism. 
Both convex supports are three-dimensional polytopes, but the prism has fewer vertices and is more similar to a simplex, meaning that its vertex set is affinely more independent than that of the cube. 
See Figure~\ref{Boxesfig}. 
\end{normalfont}
\end{example}

\subsection{Na\"ive Bayes models}\label{nBm}

Let $k\in\mathbb{N}$. The {\em $k$-mixture} of the independence model, or {\em na\"ive Bayes model} with $k$ hidden classes, with visible variables $X_1,\ldots,X_n$ 
is the set of all probability distributions expressible as convex combinations of $k$ points in $\Ecal_\Xcal$: 
\begin{equation}
\Mcal_{\Xcal,k}:=\Big\{\sum_{i\in[k]} \lambda_i p^{(i)}\colon p^{(i)}\in\Ecal_\Xcal,\; \lambda_i\geq0, \text{ for all } i\in[k],\text{ and } \sum_{i\in[k]}\lambda_i=1\Big\} .
\end{equation} 

We write  $\Mcal_{n,{k}}$ for the $k$-mixture of the independence model of $n$ binary variables. 
The dimensions of mixtures of binary independence models are known: 
\begin{theorem}[\citet{Catalisano2011}]\label{Catalisano}
The mixtures of binary independence models $\Mcal_{n,k}$ have the dimension expected from counting parameters, $\min\{nk + (k-1),2^n-1\}$, except for $\Mcal_{4,3}$, which has dimension $13$ instead of $14$. 
\end{theorem}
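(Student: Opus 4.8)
The plan is to identify $\Mcal_{n,k}$ with the real, nonnegative part of a classical secant variety and then run the Terracini--Horace method. Concretely, the Zariski closure of the cone over $\Mcal_{n,k}$, after dropping the normalization $\sum_x p(x)=1$, is the $k$-th secant variety $\sigma_k(X)$ of the Segre variety $X=\nu((\mathbb{P}^1)^n)\subset\mathbb{P}^{2^n-1}$: a normalized product distribution is a rank-one tensor $p_1\otimes\cdots\otimes p_n$, and a point of $\Mcal_{n,k}$ is a convex combination of $k$ of them. Since the defining parametrization has real coefficients and a full-dimensional semialgebraic domain, $\dim\Mcal_{n,k}=\dim\sigma_k(X)$, so it suffices to show $\dim\sigma_k(X)=\min\{k(n+1)-1,\,2^n-1\}$ with the single stated exception. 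By Terracini's lemma, for general $q_1,\dots,q_k\in X$ one has $\dim\sigma_k(X)=\dim\langle T_{q_1}X,\dots,T_{q_k}X\rangle$, and dually (apolarity) the codimension of this span in $\mathbb{P}^{2^n-1}$ equals $h^0\big((\mathbb{P}^1)^n,\mathcal{I}_{2Z}(1,\dots,1)\big)$, where $Z=\{z_1,\dots,z_k\}$ are $k$ general points and $2Z$ the associated double-point scheme, using that $H^0(\mathcal{O}(1,\dots,1))$ is the $2^n$-dimensional space of hyperplanes and a hyperplane contains $T_{q_i}X$ iff its section is singular at $z_i$. Since each double point imposes at most $n+1$ conditions on $H^0(\mathcal{O}(1,\dots,1))$, the theorem becomes the statement: \emph{$k$ general double points of $(\mathbb{P}^1)^n$ impose $\min\{k(n+1),2^n\}$ independent conditions on divisors of multidegree $(1,\dots,1)$, except that for $(n,k)=(4,3)$ they impose $14$ rather than $15$.}

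Next I would prove this by induction on $n$ via the Horace (Castelnuovo specialization) method. For fixed $n$ it is enough to treat the critical values of $k$ near $2^n/(n+1)$, as smaller or larger $k$ follow by adding or deleting general double points together with upper-semicontinuity of $h^0$. Fix a divisor $D\subset(\mathbb{P}^1)^n$ of multidegree $(1,0,\dots,0)$, so $D\cong(\mathbb{P}^1)^{n-1}$, and specialize a suitable number of the points $z_i$ onto $D$. Writing $W$ for the resulting partly specialized scheme, the restriction exact sequence
\[
0\longrightarrow \mathcal{I}_{\operatorname{Res}_D W}(0,1,\dots,1)\longrightarrow \mathcal{I}_W(1,\dots,1)\longrightarrow \mathcal{I}_{W\cap D,\,D}\big((1,\dots,1)|_D\big)\longrightarrow 0
\]
bounds $h^0(\mathcal{I}_W(1,\dots,1))$ by the sum of the cohomology of the residual scheme in $(\mathbb{P}^1)^{n-1}$ (an inhomogeneous union of double and simple points) and the cohomology of the trace on $D$ — each an instance of the same problem in fewer factors, handled by the induction hypothesis and its variants with simple points adjoined. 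Choosing the number of specialized points so that both outer terms attain their expected dimension forces the expected value for $h^0(\mathcal{I}_W(1,\dots,1))$, hence for $h^0(\mathcal{I}_{2Z}(1,\dots,1))$ by semicontinuity; when the naive count overshoots or undershoots by one, one replaces the specialization by the differential Horace method, degenerating the double points tangentially along $D$.

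The induction bottoms out at small $n$: for $n\le 2$ the statement is classical (for $n=2$ it is the rank stratification of $\mathbb{P}^1\times\mathbb{P}^1\subset\mathbb{P}^3$, which is non-defective), and $n=3$ is checked directly in $\mathbb{P}^7$. The case $n=4$ houses the exception: $H^0(\mathcal{O}(1,1,1,1))$ has dimension $16$, and for $k=3$ one expects a single section up to scalar singular at three general points, whereas in fact there is a pencil of such sections, so the three double points impose only $14$ conditions and $\dim\sigma_3(X)=13$. I would confirm this by an explicit coordinate computation (or cite the classical classification of defective Segre products) and, equally importantly, check that $k=2$ and $k\ge 4$ for $n=4$, and all $k$ for $n\ge 5$, are non-defective, so that $(4,3)$ is genuinely the only exception.

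\textbf{Main obstacle.} The substantive difficulty is the bookkeeping in the inductive step: after a specialization the residual scheme is an inhomogeneous, positionally constrained union of double and simple points, and controlling its postulation can force the differential version of Horace's method; this has to be dovetailed with the delicate finite case analysis that isolates $(n,k)=(4,3)$ as the unique defective pair rather than leaving open a handful of other small cases.
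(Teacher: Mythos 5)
The paper does not prove this statement: it is quoted from Catalisano--Geramita--Gimigliano precisely so that the proof can be omitted, so there is no internal argument to compare against. Your outline is, in skeleton, the strategy of the cited work and of the surrounding literature: pass from $\Mcal_{n,k}$ to the $k$-th secant variety of the Segre embedding of $(\mathbb{P}^1)^n$ in $\mathbb{P}^{2^n-1}$ (the semialgebraic-to-algebraic dimension transfer you invoke is legitimate, since the rank of the Jacobian of the monomial parametrization is generically maximal on the real full-dimensional parameter domain), apply Terracini's lemma, dualize to the postulation of $k$ general double points with respect to $\mathcal{O}(1,\dots,1)$, reduce to the critical values of $k$ by monotonicity and semicontinuity, and induct by specialization. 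All of these reductions are correct, and your identification of the $(4,3)$ exception ($h^0=2$ instead of the expected $1$, whence dimension $13$) is right. Two caveats. First, what you have written is a plan rather than a proof: the ``main obstacle'' you flag --- the inductive bookkeeping that rules out every defective pair other than $(4,3)$, in particular for all $k$ when $n\ge 5$ --- is not a routine application of the Horace or differential Horace method but is the entire content of the cited paper, so the proposal cannot be credited as a complete argument. Second, as a point of comparison, the reference does not run the Horace induction inside $(\mathbb{P}^1)^n$ by restricting to a $(1,0,\dots,0)$-divisor as you propose; it first applies the authors' multiprojective-affine-projective reduction, converting the multigraded problem into the Hilbert function of a scheme of double points together with auxiliary fat points in a single $\mathbb{P}^n$ with respect to degree-$n$ forms, and inducts there, which substantially tames the inhomogeneous residual schemes you would otherwise have to control. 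Your direct induction on the number of $\mathbb{P}^1$ factors is a defensible alternative skeleton, but it would have to be carried out from scratch and is likely harder to close than you suggest.
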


Let $\Amd_\Xcal(d)$ denote  the maximal cardinality of a subset $\Xcal'\subseteq\Xcal$ of minimum Hamming distance at least $d$, i.e., the maximal cardinality of a subset $\Xcal'\subseteq\Xcal$ with $d_H(x,y)\geq d$ for all distinct points $x,y\in\Xcal'$, where $d_H(x,y):=|\{i\in[n]\colon x_i\neq y_i\}|$ denotes the Hamming distance between $x$ and $y$. 
The function $\Amd_\Xcal$ is familiar in coding theory. 
The $k$-mixtures of independence models are universal approximators when $k$ is large enough. This can be made precise in terms of $\Amd_\Xcal(2)$: 

\begin{theorem}[{\citet{Montufar2010a}}]\label{unimixt}
The mixture model $\Mcal_{\Xcal,k}$ can approximate any probability distribution on $\Xcal$ arbitrarily well if $k\geq  {|\Xcal|}/{\max_{i\in[n]}|\Xcal_i|}$ and 
only if $k \geq \Amd_\Xcal(2)$. 
\end{theorem}

By results from~\citep[][]{Gilbert:1952,Varshamov:1957}, when $q$ is a power of a prime number and $\Xcal=\{0,1,\ldots,q-1\}^n$, then $\Amd_\Xcal=q^{n-1}$. In these cases the previous theorem shows that 
$\Mcal_{\Xcal,k}$ is a universal approximator of distributions on $\Xcal$ if and only if $k\geq q^{n-1}$. 
In particular, the smallest na\"ive Bayes model universal approximator of distributions on $\{0,1\}^n$ has $2^{n-1}(n+1)-1$ parameters. 

Some of the distributions not representable by a given na\"ive Bayes model can be characterized in terms of their modes. 
A state $x\in\Xcal$ is a {\em mode} of a distribution $p\in\Delta(\Xcal)$ if $p(x)>p(y)$ for all $y$ with $d_H(x,y)=1$ and it is a {\em strong mode}\label{strongmodepage} if $p(x)>\sum_{y\colon d_H(x,y)=1} p(y)$. 

\begin{lemma}[\citet{MontufarMorton2012}]\label{strongmodeslemma}
If a mixture of product distributions  $p=\sum_i \lambda_i p^{(i)}$ has strong modes  $\Ccal\subseteq\Xcal$, then there is a mixture component $p^{(i)}$ with mode $x$ for each $x\in\Ccal$. 
\end{lemma}

\subsection{Binary restricted Boltzmann machines}

The binary RBM model with $n$ visible and $m$ hidden units, denoted $\RBM_{n,m}$, 
is the set of distributions on $\{0,1\}^n$ of the form 
\begin{equation}
p(x) = \frac{1}{Z(W,B,C)} \sum_{h\in\{0,1\}^m}\exp(h^\top W x + B^\top x + C^\top h)\quad\text{for all } x\in\{0,1\}^n , \label{binrbmdef}
\end{equation} 
where $x$ denotes states of the visible units, $h$ denotes states of the hidden units, 
$W = (W_{ji})_{ji}\in \R^{m\times n}$ is a matrix of interaction weights, 
$B \in \R^n$ and $C \in \R^m$ are vectors of bias weights, and $Z(W,B,C)=\sum_{x\in\{0,1\}^n}\sum_{h\in\{0,1\}^m}\exp(h^\top W x + B^\top x + C^\top h)$ is the normalizing partition function.

It is known that these models have the expected dimension for many choices of $n$ and $m$: 
\begin{theorem}[\citet{Cueto2010}] 
The dimension of the model $\RBM_{n,m}$ is equal to $nm+n+m$ when $m+1\leq 2^{n-\lceil\log_2(n+1)\rceil}$ and it is equal to $2^n - 1$ when $m\geq 2^{n-\lfloor\log_2(n+1)\rfloor}$. 
\end{theorem}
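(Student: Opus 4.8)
The plan is to read off $\dim\RBM_{n,m}$ as the generic rank of the Jacobian of its log-linear parametrization and to bound that rank from below by degenerating to a piecewise-linear (``tropical'') parametrization whose rank is controlled by binary codes. Summing out the hidden units gives $p(x)\propto\exp(B^{\top}x)\prod_{j\in[m]}\bigl(1+\exp(W_j^{\top}x+C_j)\bigr)$, which exhibits $\overline{\RBM_{n,m}}$ as the Hadamard product of $m$ copies of the first secant variety $\sigma_2$ of the Segre variety $(\mathbb{P}^1)^{\times n}\hookrightarrow\mathbb{P}^{2^n-1}$, equivalently of $m$ copies of $\overline{\Mcal_{n,2}}=\overline{\RBM_{n,1}}$. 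As the image of a real-analytic map from $nm+n+m$ parameters into $\Delta(\{0,1\}^n)$, the model satisfies the trivial upper bound $\dim\RBM_{n,m}\le\min\{nm+n+m,\,2^n-1\}$, so only the two matching lower bounds need to be proved.

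For the lower bounds, fix a generic $(W^0,B^0,C^0)$ with no threshold $\{W^0_j{}^{\top}x+C^0_j=0\}$ passing through a vertex of $\{0,1\}^n$, and rescale it by $t\to\infty$. The potentials $\psi(\theta)_x=B^{\top}x+\sum_j\log\bigl(1+\exp(W_j^{\top}x+C_j)\bigr)$ then converge, after differentiation, to the matrix with columns $\mathbf{1}$, $(x_i)_x$, $(\varepsilon_j(x))_x$ and $(\varepsilon_j(x)x_i)_x$, where $\varepsilon_j(x)\in\{0,1\}$ is the indicator of $W^0_j{}^{\top}x+C^0_j>0$; by lower semicontinuity of rank, $\dim\RBM_{n,m}+1$ is at least the rank of this limit matrix for every choice of linear threshold functions $\varepsilon_1,\ldots,\varepsilon_m$ on $\{0,1\}^n$. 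Thus it suffices to choose the $\varepsilon_j$ so that
\[
\dim\operatorname{span}\Bigl(\{\mathbf{1},x_1,\ldots,x_n\}\cup\textstyle\bigcup_{j\in[m]}\{\varepsilon_j,\varepsilon_j x_1,\ldots,\varepsilon_j x_n\}\Bigr)
\]
is as large as possible.

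To make it large I would take $\varepsilon_j$ to be the indicator of a radius-one Hamming ball $B_j=\{x:d_H(x,c_j)\le 1\}$: such a ball is cut out by an affine inequality (so $\varepsilon_j$ is a threshold function), $|B_j|=n+1$, and the $n+1$ points of $B_j$ are affinely independent, so the block $\{\varepsilon_j,\varepsilon_j x_1,\ldots,\varepsilon_j x_n\}$ spans the full $(n+1)$-dimensional space of functions supported on $B_j$. Choosing the centers $c_j$ as the codewords of a binary code of minimum distance at least $3$ makes the $B_j$ pairwise disjoint, and translating the code so that the uncovered vertices still affinely span $\R^n$ makes the $m$ ball-blocks together with the affine block $\{\mathbf{1},x_1,\ldots,x_n\}$ linearly independent, so the span has dimension $(m+1)(n+1)$. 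A shortened Hamming code provides a distance-$3$ code in $\{0,1\}^n$ with $2^{\,n-\lceil\log_2(n+1)\rceil}$ codewords, so whenever $m+1\le 2^{\,n-\lceil\log_2(n+1)\rceil}$ this configuration exists; the same inequality forces $(m+1)(n+1)\le(m+1)\,2^{\lceil\log_2(n+1)\rceil}\le 2^n$, so the span is not yet all of $\R^{2^n}$ and $\dim\RBM_{n,m}=(m+1)(n+1)-1=nm+n+m$, which is the first assertion.

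For the second assertion, $\dim\RBM_{n,m}=2^n-1$ is equivalent to $\overline{\RBM_{n,m}}=\Delta(\{0,1\}^n)$, i.e.\ to universal approximation. Here I would invoke the universal approximation results for RBMs~\cite{LeRoux2008,Montufar2011}: if $\{0,1\}^n$ is covered by $k$ radius-one Hamming balls then $k$ hidden units suffice to reweight probabilities freely inside each ball and hence to represent every distribution, and adding hidden units with zero weights only enlarges the model; combining the Hamming code on the first $2^{\lfloor\log_2(n+1)\rfloor}-1$ coordinates with all settings of the remaining coordinates produces such a cover with $k\le 2^{\,n-\lfloor\log_2(n+1)\rfloor}$, giving $\overline{\RBM_{n,m}}=\Delta(\{0,1\}^n)$ for $m\ge 2^{\,n-\lfloor\log_2(n+1)\rfloor}$. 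The main obstacle is the degeneration step, i.e.\ making rigorous that the piecewise-linear generic rank is genuinely attained by the algebraic parametrization in the scaling limit, together with the coding-theoretic bookkeeping: keeping the ball-blocks independent of the affine block (the ``uncovered vertices span'' condition, with the few small values of $n$ checked by hand) and matching the code-size threshold to the constraint $(m+1)(n+1)\le 2^n$. Note that for the first assertion only one good choice of threshold functions is needed, so the displayed maximum need not be evaluated in general; only the easy upper bound $\le nm+n+m$ is used on the other side.
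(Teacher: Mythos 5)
Your argument for the first assertion is essentially the route taken in the paper (and in Cueto--Morton--Sturmfels): degenerate the parametrization so that the Jacobian limits to the block matrix indexed by threshold functions $\varepsilon_j$ (this is exactly the tropical lower bound $\dim(\TRBM_{n,m})\le\dim(\RBM_{n,m})$ of Theorem~\ref{tropicalRBM}), take the $\varepsilon_j$ to be indicators of disjoint radius-one Hamming balls centered at a distance-$3$ code (Corollary~\ref{lem:lemma24} with $k=2$), require the uncovered vertices to have full rank, and count codewords via Hamming/Gilbert--Varshamov to get the threshold $2^{n-\lceil\log_2(n+1)\rceil}$. That half is sound.

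The second assertion is where you have a genuine gap. You claim that $\dim\RBM_{n,m}=2^n-1$ is \emph{equivalent} to $\overline{\RBM_{n,m}}=\Delta(\{0,1\}^n)$ and then try to establish universal approximation from a radius-one covering code with $2^{n-\lfloor\log_2(n+1)\rfloor}$ codewords. Both steps fail. First, full dimension does not imply that the closure is the whole simplex (the paper points this out explicitly after Lemma~\ref{strmod1}: a discrete RBM can have the dimension of the ambient simplex without being a universal approximator), so the equivalence is false in the direction you do not need; and in the direction you do need it, the known universal approximation bound (Theorem~\ref{universalbinRBM}) requires $m\ge 2^{n-1}-1$, which is exponentially larger than $2^{n-\lfloor\log_2(n+1)\rfloor}\approx 2^n/(n+1)$. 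The heuristic that one hidden unit per covering ball lets you ``reweight probabilities freely inside each ball'' is not supported by the cited results: a hidden unit contributes a multiplicative factor $1+\exp(W_j^\top x+C_j)$, which is $1$ plus an arbitrary product-form function, not an arbitrary function supported on a ball; whether $\RBM_{n,m}$ is a universal approximator at this value of $m$ is open. The correct (and simpler) argument is the one you already set up for the first assertion: take the $\varepsilon_j$ to be indicators of radius-one balls that \emph{cover} $\{0,1\}^n$ (a covering code of radius one, of size at most $2^{n-\lfloor\log_2(n+1)\rfloor}$ via the Hamming code crossed with the free coordinates); then the limit Jacobian already has rank $2^n$, since the ball blocks alone span all functions on the cube, giving $\dim\TRBM_{n,m}=2^n-1$ and hence $\dim\RBM_{n,m}=2^n-1$ without any appeal to universal approximation. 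This is exactly the second clause of Theorem~\ref{theorem:dimension} and the covering-radius item of Proposition~\ref{binhidcor}.
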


It is also known that with enough hidden units, binary RBMs are universal approximators: 

\begin{theorem}[\citet{Montufar2011}]\label{universalbinRBM}
The model $\RBM_{n,m}$ can approximate any distribution on $\{0,1\}^n$ arbitrarily well whenever $m\geq2^{n-1}-1$. 
\end{theorem}

A previous result by~\citet[][Theorem~2]{LeRoux2008} shows that $\RBM_{n,m}$ is a universal approximator whenever $m\geq 2^n+1$. 
It is not known whether the bounds from Theorem~\ref{universalbinRBM} 
are always tight, but they show that for any given $n$, the smallest RBM universal approximator of distributions on $\{0,1\}^n$ has at most $2^{n-1}(n+1)-1$ parameters and hence not more than the smallest na\"ive Bayes model universal approximator (Theorem~\ref{unimixt}).

\section{Discrete restricted Boltzmann machines}\label{section:multRBM} 

Let $\Xcal_i=\{0,1,\ldots,r_i-1\}$ for all $i\in[n]$ and $\Ycal_j=\{0,1,\ldots,s_j-1\}$ for all $j\in[m]$. 
The graphical model with full bipartite interactions $\{\{i,j\}\colon i\in[n], j\in[m]\}$ on  $\Xcal\times\Ycal$ is the exponential family 
\begin{equation}
\Ecal_{\Xcal,\Ycal}:=\left\{ \frac{1}{Z(\theta)}\exp(\langle \theta, A^{(\Xcal,\Ycal)}\rangle)\colon \theta\in\R^{d_\Xcal d_\Ycal}\right\},
\label{exponentialfamilydef}
\end{equation}
with sufficient statistics matrix equal to the Kronecker product $A^{(\Xcal,\Ycal)}=A^{(\Xcal)}\otimes A^{(\Ycal)}$ of the sufficient statistics matrices $A^{(\Xcal)}$ and $A^{(\Ycal)}$ of the independence models $\Ecal_\Xcal$ and $\Ecal_\Ycal$. 
The matrix $A^{(\Xcal,\Ycal)}$ has $d_\Xcal d_\Ycal=\left(\sum_{i\in[n]} (|\Xcal_i|-1) +1\right)\left(\sum_{j\in[m]} (|\Ycal_i|-1) +1\right)$ linearly independent rows and  $|\Xcal\times\Ycal|$ columns, each column corresponding to a joint state $(x,y)$ of all variables. 
Disregarding the entry of $\theta$ that is multiplied with the constant row of $A^{(\Xcal,\Ycal)}$, which cancels out with the normalization function $Z(\theta)$, this parametrization of $\Ecal_{\Xcal,\Ycal}$ is one-to-one. In particular, this model has dimension $\dim( \Ecal_{\Xcal,\Ycal}) = d_\Xcal d_\Ycal-1$. 

\medskip 

The discrete RBM model $\RBM_{\Xcal,\Ycal}$ is the following set of marginal distributions:  
\begin{equation}
\RBM_{\Xcal,\Ycal}:=\Big\{q(x)=\sum_{y\in\Ycal}p(x,y) \text{ for all } x\in\Xcal \colon p\in\Ecal_{\Xcal,\Ycal} \Big\}.
\end{equation}

In the case of one single hidden unit, this model is the na\"ive Bayes model on $\Xcal$ with $|\Ycal_1|$ hidden classes. 
When all units are binary, $\Xcal=\{0,1\}^n$ and $\Ycal=\{0,1\}^m$, this model is $\RBM_{n,m}$. 
Note that the exponent in eq.~\eqref{binrbmdef} can be written as $(h^\top W x +B^\top x + C^\top h) =  \langle \theta, A^{(\Xcal,\Ycal)}_{(x,h)} \rangle$, 
taking for $\theta$ the column-by-column vectorization of the matrix $\bigl(\begin{smallmatrix} 0 & B^\top\\ C & W\end{smallmatrix}\bigr)$.

\subsection*{Conditional distributions}
The conditional distributions of discrete RBMs can be described in the following way. 
Consider a vector $\theta\in\R^{d_\Xcal d_\Ycal}$ parametrizing $\Ecal_{\Xcal,\Ycal}$, and the matrix $\Theta\in\R^{d_\Ycal\times d_\Xcal}$ with column-by-column vectorization equal to $\theta$. A lemma by~\citet{Roth1934} shows that 
$\theta^\top (A^{(\Xcal)}\otimes A^{(\Ycal)})_{(x,y)} =  (A^{(\Xcal)}_x)^\top \Theta^\top A^{(\Ycal)}_y$ for all $x\in\Xcal$, $y\in\Ycal$, and hence 
\begin{gather}
\left\langle\theta, A^{(\Xcal,\Ycal)}_{(x,y)}\right\rangle = \left\langle \Theta A^{(\Xcal)}_x, A^{(\Ycal)}_y\right\rangle= \left\langle \Theta^\top  A^{(\Ycal)}_y, A^{(\Xcal)}_x \right\rangle\quad\forall x\in\Xcal, y\in\Ycal . \label{eq:Roth}
\end{gather}  

The inner product in eq.~\eqref{eq:Roth} describes following probability distributions: 
\begin{eqnarray}
p_\theta(\cdot,\cdot) &=& \frac{1}{Z(\theta)} \exp\big(\big\langle\theta, A^{(\Xcal,\Ycal)} \big\rangle\big),\\
p_\theta(\cdot|x) &=& \frac{1}{Z\big(\Theta A^{(\Xcal)}_x\big)} \exp\big(\big\langle \Theta A^{(\Xcal)}_x, A^{(\Ycal)} \big\rangle\big), \text{ and }\\
p_\theta(\cdot|y) &=& \frac{1}{Z\big(\Theta^\top  A^{(\Ycal)}_y\big)} \exp\big(\big\langle \Theta^\top  A^{(\Ycal)}_y, A^{(\Xcal)}  \big\rangle\big).
\end{eqnarray}
Geometrically, $\Theta A^{(\Xcal)}$ is a linear projection of the columns of the sufficient statistics matrix $A^{(\Xcal)}$ into the parameter space of $\Ecal_\Ycal$, and similarly, $\Theta^\top  A^{(\Ycal)}$ is a linear projection of the columns of $A^{(\Ycal)}$ into the parameter space of $\Ecal_\Xcal$.

\subsection*{Polynomial parametrization}
Discrete RBMs can be parametrized not only in the exponential way discussed above, 
but also by simple polynomials. 
The exponential family $\Ecal_{\Xcal,\Ycal}$ can be parametrized by square free monomials:  
\begin{equation}
p(v,h) = \frac1Z \prod_{\scriptsize\begin{matrix}{\{j,i\}\in[m]\times[n],}\\{ (y_j',x_i')\in\Ycal_j\times\Xcal_i}\end{matrix}} (\gamma_{\{j,i\},(y_j',x_i')})^{\delta_{y_j'}(h_j)\delta_{x_i'}(v_i)}\; \text{ for all } (v,h)\in\Ycal\times\Xcal ,  
\end{equation}
where $\gamma_{\{j,i\},(y_j',x_i')}$ are positive reals. 
The probability distributions in $\RBM_{\Xcal,\Ycal}$ can be written as 
\begin{equation}
p(v) = \frac1Z \prod_{j\in[m]} \Big( \sum_{h_j\in\Ycal_j} \gamma_{\{j,1\},(h_j,v_1)}\cdots \gamma_{\{j,n\},(h_j,v_n)} \Big)\quad \text{ for all } v\in\Xcal.
\label{polynomialRBMmap}
\end{equation} 
The parameters $\gamma_{\{j,i\},(y_j',x_i')}$ correspond to $\exp(\theta_{\{j,i\},(y_j',x_i')})$ in the parametrization given in eq.~\eqref{exponentialfamilydef}.

\subsection*{Products of mixtures and mixtures of products}\label{section:distributed}

In the following we describe discrete RBMs from two complementary perspectives: 
(i) as products of experts, where each expert is a mixture of products, and (ii) as restricted mixtures of product distributions.  
The renormalized entry-wise (Hadamard) product of two probability distributions $p$ and $q$ on $\Xcal$ is  
defined as $p\circ q :=(p(x)q(x))_{x\in\Xcal}/ \sum_{y\in\Xcal} p(y)q(y)$. 
Here we assume that $p$ and $q$ have overlapping supports, such that the definition makes sense. 

\begin{proposition}\label{prodmixt}
The model $\RBM_{\Xcal,\Ycal}$ is a Hadamard product of mixtures of product distributions: 
\begin{equation*}
\RBM_{\Xcal,\Ycal} = \Mcal_{\Xcal,|\Ycal_1|}\circ\cdots \circ \Mcal_{\Xcal,|\Ycal_m|}\,.
\end{equation*}
\end{proposition}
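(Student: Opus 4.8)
The plan is to compute the marginal distribution $p_\theta(x) = \sum_{y\in\Ycal} p_\theta(x,y)$ directly from the exponential parametrization and recognize it as a Hadamard product. First I would use the fact that $\Ycal = \Ycal_1\times\cdots\times\Ycal_m$ and that the sufficient statistics $A^{(\Ycal)}$ factor over the hidden units: writing $y = (y_1,\ldots,y_m)$, the inner product $\langle \Theta A^{(\Xcal)}_x, A^{(\Ycal)}_y\rangle$ splits as a sum $\sum_{j\in[m]} \langle \Theta_j A^{(\Xcal)}_x, A^{(\Ycal_j)}_{y_j}\rangle$, where $\Theta_j$ collects the rows of $\Theta$ associated with hidden unit $j$ (plus the shared constant row). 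Consequently $\exp(\langle\theta, A^{(\Xcal,\Ycal)}_{(x,y)}\rangle)$ factors as a product over $j\in[m]$ of terms depending only on $x$ and $y_j$, and the sum over $y\in\Ycal$ becomes a product of sums over the individual $y_j\in\Ycal_j$. This is exactly the content of equation~\eqref{polynomialRBMmap} in the polynomial parametrization, so I would lean on that display.

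Next I would identify each factor $\sum_{y_j\in\Ycal_j}\exp(\langle \Theta_j A^{(\Xcal)}_x, A^{(\Ycal_j)}_{y_j}\rangle)$, suitably normalized, as a point in the na\"ive Bayes model $\Mcal_{\Xcal,|\Ycal_j|}$. Indeed, a discrete RBM with a single hidden unit $\Ycal_j$ is precisely the na\"ive Bayes model on $\Xcal$ with $|\Ycal_j|$ hidden classes — this is stated in the text right after the definition of $\RBM_{\Xcal,\Ycal}$ — and its distributions have the form $\frac{1}{Z_j}\sum_{y_j\in\Ycal_j} q_{y_j}(x)$ with each $q_{y_j}$ a product distribution on $\Xcal$ obtained by exponentiating a rank-one-in-$x$ linear form. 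So for a fixed $\theta$ the $j$-th factor, after normalizing, is a member of $\Mcal_{\Xcal,|\Ycal_j|}$, and $p_\theta(x)$ is proportional to the entry-wise product of these $m$ members; after renormalization this is by definition the Hadamard product $\circ$ of the factors. This shows $\RBM_{\Xcal,\Ycal}\subseteq \Mcal_{\Xcal,|\Ycal_1|}\circ\cdots\circ\Mcal_{\Xcal,|\Ycal_m|}$.

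For the reverse inclusion I would run the argument backwards: given arbitrary $p^{(j)}\in\Mcal_{\Xcal,|\Ycal_j|}$ for $j\in[m]$, choose natural parameters realizing each $p^{(j)}$ (using that the na\"ive Bayes model with $k$ classes is exactly the single-hidden-unit discrete RBM, whose closure we may need to be mildly careful about — see below), assemble the corresponding blocks $\Theta_j$ into a single $\Theta$, vectorize to get $\theta$, and observe that the marginal of $p_\theta$ on $\Xcal$ is, up to normalization, the entry-wise product of the $p^{(j)}$, i.e.\ $p^{(1)}\circ\cdots\circ p^{(m)}$. The shared constant term can be absorbed into a single hidden unit's bias (or into the normalization), so there is no conflict from the overlapping constant rows of the $A^{(\Ycal_j)}$.

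The main obstacle is bookkeeping rather than conceptual: one must be careful about whether the factors live in $\Mcal_{\Xcal,|\Ycal_j|}$ itself or its closure $\overline{\Mcal_{\Xcal,|\Ycal_j|}}$, since the exponential parametrization of a single hidden unit reaches only the (relative) interior-type distributions with strictly positive component weights $\lambda_i$, whereas $\Mcal_{\Xcal,k}$ as defined allows $\lambda_i = 0$. However, the same subtlety already appears symmetrically in the statement — $\RBM_{\Xcal,\Ycal}$ is defined via the open exponential family $\Ecal_{\Xcal,\Ycal}$, so both sides are non-closed sets — and one checks that the factorization is an exact bijection at the level of these non-closed families: a distribution is in $\RBM_{\Xcal,\Ycal}$ iff it is a renormalized product of $m$ distributions, the $j$-th being in the single-hidden-unit model $\RBM_{\Xcal,\Ycal_j} \subseteq \Mcal_{\Xcal,|\Ycal_j|}$. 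So I would state the proposition with this understanding (matching how $\Mcal_{\Xcal,k}$ appears elsewhere as the image of the na\"ive Bayes parametrization) and note that taking closures on both sides preserves the identity, since the Hadamard product is continuous on distributions with common support.
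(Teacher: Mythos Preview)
Your proposal is correct and follows essentially the same route as the paper: factor the sum over $y\in\Ycal$ into a product of sums over the individual $y_j\in\Ycal_j$ (the paper does this via a homogeneous version of $A^{(\Xcal,\Ycal)}$ and explicitly invokes eq.~\eqref{polynomialRBMmap}, just as you do), identify each factor as an element of $\Mcal_{\Xcal,|\Ycal_j|}$, and note that the blocks of parameters can be chosen independently to obtain the reverse inclusion. Your additional remarks on the shared constant row and on the open-versus-closed distinction are more careful than the paper's treatment but do not constitute a different argument.
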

\begin{proof}\,  
The statement can be seen directly by considering the parametrization from eq.~\eqref{polynomialRBMmap}.  To make this explicit, one can use a {\em homogeneous} version of the matrix $A^{(\Xcal,\Ycal)}$ which we denote by $A$ and which defines the same model.  
Each row of $A$ is indexed by an edge $\{i,j\}$ of the bipartite graph and a joint state $(x_i, h_j)$ of the visible and hidden units connected by this edge. 
Such a row has a one in any column when these states agree with the global state, and zero otherwise.  
For any $j\in[m]$ let $A_{j,:}$ denote the matrix containing the rows of $A$ with indices 
$(\{i,j\}, (x_i,h_j))$ for all $x_i\in\Xcal_i$ for all $i\in[n]$ for all $h_j\in\Ycal_j$, and let $A(x,h)$ denote the $(x,h)$-column of $A$. We have
\begin{align*}
p(x)=&\frac1Z \sum_h \exp(\langle \theta, A(x,h)\rangle) \\
=& \frac1Z \sum_h \exp(\langle \theta_{1,:}, A_{1,:}(x,h)\rangle) \exp(\langle \theta_{2,:}, A_{2,:}(x,h)\rangle)\cdots \exp(\langle \theta_{m,:}, A_{m,:}(x,h)\rangle) \\
=& \frac1Z \Big(\sum_{h_1} \exp(\langle \theta_{1,:}, A_{1,:}(x,h_1)\rangle) \Big) 
\cdots \Big(\sum_{h_m}\exp(\langle \theta_{m,:}, A_{m,:}(x,h_m)\rangle)\Big) \\
=&\frac1Z (Z_1 p^{(1)}(x)) \cdots (Z_m p^{(m)}(x)) 
=\frac1{Z'} p^{(1)}(x)\cdots p^{(m)}(x) , 
\end{align*}
where $p^{(j)}\in\Mcal_{\Xcal,|\Ycal_j|}$ and $Z_j=\sum_{x\in\Xcal} \sum_{h_j\in\Ycal_j}\exp(\langle \theta_{j,:}, A_{j,:}(x,h_j)\rangle)$ for all $j\in[m]$. 
Since the vectors $\theta_{j,:}$ can be chosen arbitrarily, the factors $p^{(j)}$ can be made arbitrary within $\Mcal_{\Xcal,|\Ycal_j|}$. 
\end{proof}

Of course, every distribution in $\RBM_{\Xcal,\Ycal}$ 
is a mixture distribution $p(x) = \sum_{h\in\Ycal} p(x|h) q(h)$. 
The mixture weights are given by the marginals $q(h)$ on $\Ycal$ of distributions from $\Ecal_{\Xcal,\Ycal}$, and the mixture components can be described as follows.  

\begin{proposition}\label{discrRBMmixtcomps}
The set of conditional distributions $p(\cdot|h)$, $h\in\Ycal$ of a distribution in $\Ecal_{\Xcal,\Ycal}$ is the set of product distributions in $\Ecal_\Xcal$ with parameters 
$\theta_h =\Theta^\top A^{(\Ycal)}_h$, $h\in\Ycal$ equal to a linear projection of the vertices $\{A^{(\Ycal)}_h\colon h\in\Ycal\}$ of the Cartesian product of simplices $Q_{\Ycal} \cong \Delta( \Ycal_1)\times\cdots\times \Delta( \Ycal_m)$. 
\end{proposition}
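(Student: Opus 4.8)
The plan is to obtain the conditional distributions by directly unpacking Roth's identity~\eqref{eq:Roth}; the proposition is essentially a restatement of that identity together with the definition of $\Ecal_\Xcal$. First I fix a natural parameter $\theta\in\R^{d_\Xcal d_\Ycal}$ of $\Ecal_{\Xcal,\Ycal}$ and let $\Theta\in\R^{d_\Ycal\times d_\Xcal}$ be the matrix whose column-by-column vectorization equals $\theta$, exactly as in~\eqref{eq:Roth}. That identity gives $p_\theta(x,y)=\frac1Z\exp(\langle\Theta^\top A^{(\Ycal)}_y,A^{(\Xcal)}_x\rangle)$ for all $(x,y)\in\Xcal\times\Ycal$. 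Fixing the hidden state $y=h$ and renormalizing over $x\in\Xcal$, the constant $Z$ and every factor depending on $h$ alone cancel, leaving
\[
p_\theta(x\mid h)=\frac{1}{Z_h}\exp\bigl(\langle\theta_h,A^{(\Xcal)}_x\rangle\bigr),\qquad \theta_h:=\Theta^\top A^{(\Ycal)}_h,
\]
with $Z_h=\sum_{x'\in\Xcal}\exp(\langle\theta_h,A^{(\Xcal)}_{x'}\rangle)$. This is the generic member of the exponential family $\Ecal_\Xcal$ with natural parameter $\theta_h$, hence a strictly positive product distribution. So every conditional of a member of $\Ecal_{\Xcal,\Ycal}$ has the asserted form, and the tuple of its mixture components is indexed by $h\in\Ycal$ with parameters $\theta_h=\Theta^\top A^{(\Ycal)}_h$.

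For the geometric description I would then note that $\Theta^\top\colon\R^{d_\Ycal}\to\R^{d_\Xcal}$ is linear and that the points $A^{(\Ycal)}_h$, $h\in\Ycal$, are by construction the vertices of the convex support $Q_\Ycal\cong\Delta(\Ycal_1)\times\cdots\times\Delta(\Ycal_m)$ of $\Ecal_\Ycal$ (the same fact recalled for $\Ecal_\Xcal$ in Section~\ref{section:preliminaries}). Thus, for fixed $\theta$, the parameters $\{\theta_h\}_{h\in\Ycal}$ form the image of this fixed vertex set under one linear map, i.e.\ a linear projection of the vertices of $Q_\Ycal$ into the natural-parameter space of $\Ecal_\Xcal$. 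For the converse inclusion I use that $\theta\mapsto\Theta$ is a bijection and that every linear map $\R^{d_\Ycal}\to\R^{d_\Xcal}$ occurs as some $\Theta^\top$, so ranging over all $\theta$ realizes every such projection and hence every corresponding configuration of mixture components; this closes the identification of the two sets.

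The only point needing a little care is the bookkeeping of the constant statistics: the all-ones row of $A^{(\Ycal)}$ pins the first coordinate of each $A^{(\Ycal)}_h$ to $1$, and the all-ones row of $A^{(\Xcal)}$ makes the matching coordinate of $\theta_h$ irrelevant after normalization, so the correspondence $\theta\mapsto(\theta_h)_{h\in\Ycal}$ should be read modulo the spans of the constant statistics on both sides, in accordance with the ``disregarding the constant row'' convention adopted after~\eqref{aaa}. This is routine once written out, and is the only place where the argument is not a purely formal rewriting of~\eqref{eq:Roth}; I do not anticipate any genuine obstacle.
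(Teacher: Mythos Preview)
Your proof is correct and is precisely an expanded version of the paper's own argument, which consists of the single line ``This is by eq.~\eqref{eq:Roth}.'' You have simply unpacked that identity and added the routine bookkeeping about constant rows and the surjectivity of $\theta\mapsto\Theta^\top$; nothing in your write-up departs from the paper's intended reasoning.
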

\begin{proof}\,  
This is by eq.~\eqref{eq:Roth}. 
\end{proof}

\section{Products of simplices and their normal fans}\label{section:inference}

Binary RBMs have been analyzed by considering each of the $m$ hidden units as defining a hyperplane $H_j$ slicing the $n$-cube into two regions.  To generalize the results provided by this analysis, in this section we replace the $n$-cube with a general product of simplices $Q_{\Xcal}$, and replace the two regions defined by the hyperplane $H_j$ by the $|\Ycal_j|$ regions defined by the maximal cones of the normal fan of the simplex $\Delta ({\Ycal_j})$.

\subsection*{Subdivisions of independence models}

The {\em normal cone} of a polytope $Q\subset\R^d$ at a point $x\in Q$ is the set of all vectors $v\in\R^d$ with $\langle v, (x-y)\rangle\geq0$ for all $y\in Q$. 
We denote by $R_x$ the normal cone of the product of simplices $Q_\Xcal=\conv\{A^{(\Xcal)}_x\}_{x\in\Xcal}$ at the vertex $A^{(\Xcal)}_x$. 
The  normal fan  $\Fcal_\Xcal$ is the set of all normal cones of $Q_\Xcal$. 
The product distributions  $p_\theta = \frac{1}{Z(\theta)}\exp(\langle\theta,A^{(\Xcal)} \rangle) \in\Ecal_\Xcal$ strictly maximized at $x\in\Xcal$, with $p_\theta(x)> p_\theta(y)$ for all $y\in\Xcal\setminus\{x\}$, are those with parameter vector $\theta$ in the relative interior of $R_x$. 
Hence the normal fan $\Fcal_\Xcal$ partitions the parameter space of the independence model into regions of distributions with maxima at different inputs. 

\subsection*{Inference functions and slicings}

For any choice of parameters of the model $\RBM_{\Xcal,\Ycal}$, there is an {\em inference function} $\pi \colon \Xcal\to \Ycal$, (or more generally $\pi\colon\Xcal\to 2^\Ycal$), which computes the most likely hidden state 
given a visible state. These functions are not necessarily injective nor surjective. 
For a visible state $x$, the conditional distribution on the hidden states is a product distribution $p(y|X=x) = \frac1Z \exp(\langle \Theta A^{(\Xcal)}_x, A^{(\Ycal)}_y\rangle)$ which is maximized at the state $y$ for which $\Theta A^{(\Xcal)}_x\in R_y$. 
The preimages of the cones $R_y$ by the map $\Theta$ partition the input space $\R^{d_\Xcal}$ and are called {\em inference regions}. See Figure~\ref{slicingsfig} and Example~\ref{ex:maxcones}.

\begin{figure}
\setlength{\unitlength}{1.07\textwidth}
\begin{center}
\begin{picture}(.92,.2)(0,.04)
\put(0,0.02){\includegraphics[trim=4.5cm 19.2cm 4cm 2cm, clip=true, width=.9\unitlength]{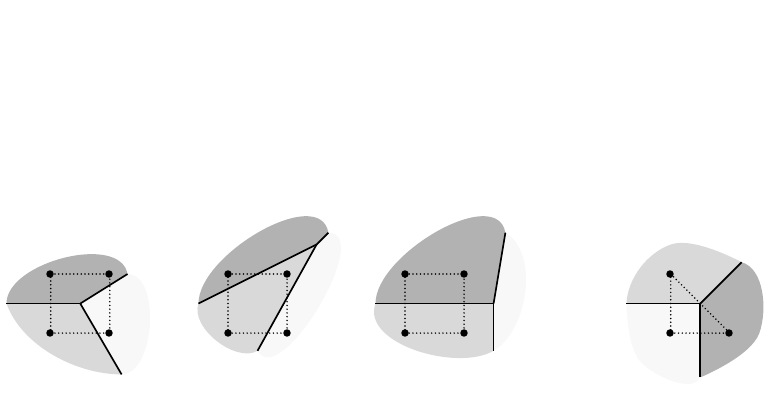}}
\put(.73,.04){$R_0$}
\put(.735,.2){$R_1$}
\put(.88,.05){$R_2$}
\put(.76,.08){$0$}
\put(.76,.16){$1$}
\put(.855,.08){$2$}
\put(0.02,.068){\small$(0,0)$}
\put(0.13,.075){\small$(0,1)$}
\put(0.02,.18){\small$(1,0)$}
\put(0.11,.18){\small$(1,1)$}
\put(.45,.2){$\Theta^{-1}(R_2)$}
\put(.45,.05){$\Theta^{-1}(R_1)$}
\put(.59,.13){$\Theta^{-1}(R_0)$}
\end{picture}
\end{center}
\caption{Three slicings of a square by the normal fan of a triangle with maximal cones $R_0$, $R_1$, and $R_2$, corresponding to three possible inference functions of $\RBM_{\{0,1\}^2,\{0,1,2\}}$. 
}
\label{slicingsfig}
\end{figure}

\begin{definition}\label{defslicing}
\begin{normalfont}
A {\em $\Ycal$-slicing} of a finite set $\Zcal\subset\R^{d_\Xcal}$ 
is a partition of $\Zcal$ into the preimages 
of the cones $R_y$, $y\in\Ycal$ 
by a linear map $\Theta\colon \R^{d_\Xcal}\to\R^{d_\Ycal}$.  We assume  that $\Theta$ is generic, such that it maps each element of $\Zcal$ into the interior of some $R_y$. 
\end{normalfont}
\end{definition}

For example, when $\Ycal=\{0,1\}$, the fan $\Fcal_\Ycal$ consists of a hyperplane and the two closed half-spaces defined by that hyperplane. 
A $\Ycal$-slicing is in this case a standard slicing by a hyperplane. 

\begin{example}\label{ex:maxcones}
\begin{normalfont}
Let $\Xcal=\{0,1,2\}\times\{0,1\}$ and  $\Ycal=\{0,1\}^4$.  
The maximal cones $R_y$, $y\in\Ycal$ of the normal fan of the $4$-cube with vertices $\{0,1\}^4$ are the closed orthants of~$\R^4$.  
The $6$ vertices $\{A^{(\Xcal)}_x\colon x\in\Xcal\}$ of the prism $\Delta(\{0,1,2\})\times\Delta(\{0,1\})$ can be mapped into $6$ distinct orthants of~$\R^4$, each orthant with an even number of positive coordinates: 
\begin{equation}
\renewcommand{\arraystretch}{.8}
\underset{\Theta}{
\left(\begin{array}{r  r  r  r}
\! 3&\! -2&\! -2&\!  -2\!\\
\! 1&\!  2&\! -2&\!  -2\!\\
\! 1&\! -2&\! -2&\!   2\!\\
\! 1&\! -2&\!  2&\!  -2\!
\end{array}
\right)
}
\underset{A^{(\Xcal)}}{
\left(\begin{array}{c c c c c c}
 1 &1 &1 & 1&1 &1 \\
 1 &1 &1 &0 &0 &0 \\
 1 &0 &0 &1 &0 &0 \\ 
 0 &1 &0 &0 &1 &0 
\end{array}
\right)
}
=
\left(\begin{array}{r r r r r r}
\!-1 &\!-1 &\!  1 &\! 1 &\! 1 &  3\!\\
\! 1 &\! 1 &\!  3 &\! -1 &\!-1 &  1\!\\
\!-3 &\! 1 &\! -1 &\!-1 &\! 3 &  1\!\\ 
\! 1 &\!-3 &\! -1 &\! 3 &\!-1 & 1\!
\end{array}
\right) . 
\end{equation}
\end{normalfont}
\end{example}

Even in the case of one single hidden unit the slicings can be complex, but the following simple type of slicing is always available. 

\begin{proposition}\label{paralelslicings}
Any slicing by $k-1$ parallel hyperplanes is a $\{1,2,\ldots,k\}$-slicing. 
\end{proposition}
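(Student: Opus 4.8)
The plan is to exhibit an explicit linear map $\Theta$ realizing a given parallel-hyperplane partition as a slicing by the normal fan of a single $k$-valued hidden unit. First I would unpack the target: $\{1,\dots,k\}$ means a single hidden unit with a state space of size $k$, so $\Ycal=\Ycal_1\cong\{0,\dots,k-1\}$ and $Q_\Ycal\cong\Delta_{k-1}$. Reading off the columns $e_0,\ e_0+e_1,\dots,e_0+e_{k-1}$ of $A^{(\Ycal)}$, a direct computation of normal cones shows that, with the convention $\theta_0:=0$, one has $\theta\in R_y$ exactly when $\theta_y=\max_{0\le j\le k-1}\theta_j$, and $\theta\in\operatorname{int}(R_y)$ when this maximum is strict. (This is just the statement that $p(y\mid x)$ is the softmax of the scores $0,\theta_1,\dots,\theta_{k-1}$.) Thus a $\{1,\dots,k\}$-slicing of a finite set amounts to assigning each point to the $\arg\max$ of an affine family of $k$ functions of that point --- affine, not merely linear, since $\Theta$ is applied to $A^{(\Xcal)}_x$, which carries a constant coordinate equal to $1$.

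Next I would describe the parallel-hyperplane slicing concretely: let $w$ be the common normal direction and $c_1<\dots<c_{k-1}$ the offsets, so the slicing sends $z$ to region $S_i$ precisely when $c_{i-1}<\langle w,z\rangle<c_i$ (with $c_0:=-\infty$, $c_k:=+\infty$). Now define $\Theta$ by $(\Theta z)_0:=0$ and $(\Theta z)_j:=j\,\langle w,z\rangle-(c_1+\dots+c_j)$ for $j=1,\dots,k-1$; using the constant coordinate of $A^{(\Xcal)}$ this is the restriction of a genuine linear map. The point is that the successive differences $(\Theta z)_j-(\Theta z)_{j-1}=\langle w,z\rangle-c_j$ are strictly decreasing in $j$ (the $c_j$ increase), so the sequence $0=(\Theta z)_0,(\Theta z)_1,\dots,(\Theta z)_{k-1}$ is strictly concave; hence its unique maximizing index equals the number of thresholds $c_j$ lying below $\langle w,z\rangle$ --- that is, index $i-1$ when $\langle w,z\rangle\in(c_{i-1},c_i)$. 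Equivalently, the $(\Theta z)_j$, $j=1,\dots,k-1$, together with the constant $0$ are the supporting lines of the convex piecewise-linear function with breakpoints $c_1,\dots,c_{k-1}$ and slopes $0,1,\dots,k-1$. Combining with the first paragraph, $z\in S_i$ iff $\Theta z\in\operatorname{int}(R_{i-1})$, so the preimages $\Theta^{-1}(R_0),\dots,\Theta^{-1}(R_{k-1})$ restricted to $\Zcal$ are exactly $S_1,\dots,S_k$.

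Finally, genericity transfers for free: for the parallel-hyperplane cut to define a partition at all, no point of $\Zcal$ lies on any of the $k-1$ hyperplanes, i.e. $\langle w,z\rangle\notin\{c_1,\dots,c_{k-1}\}$ for every $z\in\Zcal$; by the equivalence above, $\Theta z$ then lies in the interior of some $R_y$, as Definition~\ref{defslicing} requires. This finishes the argument. The only step carrying any content is the construction of $\Theta$ in the second paragraph --- recognizing that the maximal cones of the normal fan of $\Delta_{k-1}$ are the $\arg\max$ regions of an affine family, and that an ordered run of intervals on a line is carved out as such $\arg\max$ regions by the supporting lines of a convex piecewise-linear function with the prescribed breakpoints; everything else is bookkeeping around the constant coordinate and genericity.
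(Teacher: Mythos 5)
Your proof is correct and takes essentially the same route as the paper's: both realize the parallel-hyperplane partition by an affine embedding of the line $\lambda=\langle w,z\rangle$ into the normal fan of $\Delta_{k-1}$ so that it crosses the $k$ maximal cones in order with prescribed transition points, the paper via a general strictly increasing, strictly concave reparametrization $r_y=f(b_y)$ and you via the explicit concave score sequence with slopes $0,1,\ldots,k-1$ and breakpoints $c_1,\ldots,c_{k-1}$. Your version is a concrete instance of the same construction, and is somewhat more careful about identifying the cones $R_y$, handling arbitrary (possibly negative) thresholds, the affine-versus-linear bookkeeping via the constant coordinate of $A^{(\Xcal)}$, and genericity.
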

\begin{proof}\,  
We show that there is a line $\Lcal= \{\lambda r - b \colon \lambda\in\R\}$, $r,b \in \mathbb{R}^k$ intersecting all cells of $\Fcal_\Ycal$, $\Ycal=\{1,\ldots,k\}$. 
We need to show that there is a choice of $r$ and $b$ such that for every $y\in\Ycal$ the set $I_y\subseteq\R$ of all $\lambda$ with $\langle \lambda r - b, (\be_y-\be_z)\rangle>0$ for all $z\in\Ycal\setminus\{y\}$ has a non-empty interior. 
Now, $I_y$ is the set of $\lambda$ with 
\begin{equation}
 \lambda {(r_y- r_z)}{} > b_y-b_z  \quad\text{ for all  $z\neq y$}.
 \label{finidiff}
\end{equation}
Choosing $b_1< \cdots < b_k$ and   $r_y= f(b_y)$, where $f$ is a strictly increasing and strictly concave function, we get $I_1=(-\infty , \frac{b_{2} -b_{1}}{ r_2 - r_1 })$, 
$I_y=( \frac{b_{y } -b_{y-1}}{ r_y - r_{y-1} } ,  \frac{b_{y+1} -b_y }{ r_{y+1} - r_y } )$ for  $y = 2,3,\ldots, k-1 $, and 
 $I_k=(\frac{b_{k } -b_{k-1}}{ r_k - r_{k-1} }, \infty)$. 
The lengths $\infty, l_2, \ldots, l_{k-1}, \infty$ of the intervals $I_1,\ldots,I_k$ can be adjusted arbitrarily  by choosing suitable differences $r_{j+1} - r_{j}$ for all $j=1,\ldots,k-1$. 
\end{proof} 

\subsection*{Strong modes} 

Recall the definition of strong modes given in page~\pageref{strongmodepage}. 
\begin{lemma}\label{strmod1}\label{strongmodesdiscreteRBM}
Let $\Ccal\subseteq\Xcal$ be a set of arrays which are pairwise different in at least two entries (a code of minimum distance two). 

\begin{itemize}
\item 
If $\RBM_{\Xcal,\Ycal}$ contains a probability distribution with strong modes $\Ccal$, then 
there is a linear map $\Theta$ of $\{A^{(\Ycal)}_y\colon y\in\Ycal\}$ into the $\Ccal$-cells of $\Fcal_\Xcal$ 
(the cones $R_x$ above the codewords $x\in\Ccal$)  
sending at least one vertex into each cell. 
\item 
If there is a linear map $\Theta$ of $\{A^{(\Ycal)}_y\colon y\in\Ycal\}$ 
into the $\Ccal$-cells 
of  $\Fcal_\Xcal$, with $\max_x\{ \langle\Theta^\top A^{(\Ycal)}_y,A^{(\Xcal)}_x\rangle\}=c$ for all $y\in\Ycal$, then $\RBM_{\Xcal,\Ycal}$ contains a probability distribution with strong modes $\Ccal$. 
\end{itemize}
\end{lemma}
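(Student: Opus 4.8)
The plan is to prove the two implications separately, in both leaning on the mixture description from Proposition~\ref{discrRBMmixtcomps}: any $p\in\RBM_{\Xcal,\Ycal}$ is a mixture $p(x)=\sum_{h\in\Ycal}q(h)\,p(x|h)$ with strictly positive weights $q(h)$, where $p(\cdot|h)$ is the product distribution in $\Ecal_\Xcal$ with parameter $\Theta^\top A^{(\Ycal)}_h$ and $\Theta$ reshapes the RBM parameter as in eq.~\ref{eq:Roth}. I will also use the elementary fact that a product distribution in $\Ecal_\Xcal$ with mode $x$ has $x_i$ as the strict argmax of its $i$-th marginal for each $i$, is therefore strictly maximized at $x$, and hence (by the normal-fan characterization of strict maximizers) has parameter in the relative interior of $R_x$.

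For the forward implication I would take $p\in\RBM_{\Xcal,\Ycal}$ with strong modes $\Ccal$ and apply Lemma~\ref{strongmodeslemma} to its mixture representation: for each $x\in\Ccal$ there is $h(x)\in\Ycal$ with $p(\cdot|h(x))$ having mode $x$, so $\Theta^\top A^{(\Ycal)}_{h(x)}\in R_x$ by the remark above. Then the linear map $v\mapsto\Theta^\top v$ carries the vertex $A^{(\Ycal)}_{h(x)}$ of $Q_\Ycal$ into the $\Ccal$-cell $R_x$; since the cones $R_x$, $x\in\Ccal$, are distinct, each of them receives at least one vertex, which is the asserted map.

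For the converse, the hypothesis gives (generically, in the sense of Definition~\ref{defslicing}) at least one vertex of $Q_\Ycal$ mapped by $\Theta^\top$ into the interior of each $\Ccal$-cell $R_x$ and no vertex mapped outside the $\Ccal$-cells. I would introduce the family $p_\beta\in\RBM_{\Xcal,\Ycal}$ with RBM matrix parameter $\beta\Theta$, $\beta>0$, whose marginal is, by eq.~\ref{eq:Roth},
\begin{equation*}
Z_\beta\,p_\beta(x)=\sum_{h\in\Ycal}\exp\!\big(\beta\,\langle\Theta^\top A^{(\Ycal)}_h,\,A^{(\Xcal)}_x\rangle\big),\qquad x\in\Xcal .
\end{equation*}
The condition $\max_x\langle\Theta^\top A^{(\Ycal)}_y,A^{(\Xcal)}_x\rangle=c$ for all $y\in\Ycal$ says exactly that each $\Theta^\top A^{(\Ycal)}_h$ lies on the level set where the support function of $Q_\Xcal$ equals $c$, so $g(x):=\max_{h\in\Ycal}\langle\Theta^\top A^{(\Ycal)}_h,A^{(\Xcal)}_x\rangle\le c$ for all $x\in\Xcal$, with equality when $x\in\Ccal$ (a vertex lying in $R_x$ attains it) and, because the vertices lie in the interiors of the $\Ccal$-cells, with strict inequality when $x\notin\Ccal$. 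Put $\epsilon:=c-\max\{g(x):x\in\Xcal\setminus\Ccal\}>0$ and $N:=\sum_{i\in[n]}(|\Xcal_i|-1)$. Since $\Ccal$ has minimum Hamming distance two, every Hamming-$1$ neighbour of a codeword is a non-codeword, hence for $x\in\Ccal$ we get $Z_\beta\,p_\beta(x)\ge\exp(\beta c)$ and $\sum_{y:\,d_H(x,y)=1}Z_\beta\,p_\beta(y)\le N|\Ycal|\exp(\beta(c-\epsilon))$, so $p_\beta(x)>\sum_{y:\,d_H(x,y)=1}p_\beta(y)$ once $\exp(\beta\epsilon)>N|\Ycal|$. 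There are no further strong modes, because $p_\beta$ is a mixture of the product distributions with parameters $\beta\Theta^\top A^{(\Ycal)}_h$, so by Lemma~\ref{strongmodeslemma} a strong mode $y$ of $p_\beta$ would force some such product to have mode $y$, i.e.\ $\Theta^\top A^{(\Ycal)}_h\in R_y$, and by hypothesis $y\in\Ccal$. Hence for $\beta$ large $p_\beta\in\RBM_{\Xcal,\Ycal}$ has strong modes exactly $\Ccal$.

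The step I expect to be hardest is keeping the converse watertight at the cone boundaries: one needs each hidden vertex in the \emph{interior} of its cone so that the product components $p_\beta(\cdot|h)$ have unique modes and $g(x)<c$ strictly off $\Ccal$, and one needs to see that the balancing condition $\max_x\langle\Theta^\top A^{(\Ycal)}_y,A^{(\Xcal)}_x\rangle=c$ is exactly what upgrades the pointwise bound $g\le c$ to one uniform over all visible states; without it a non-codeword Hamming-$1$ neighbour of a codeword could have strictly larger tropical height $g$, and the codeword would then fail to be even a mode, let alone a strong mode.
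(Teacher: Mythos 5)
Your proof is correct and follows exactly the route the paper intends: its own proof is a one-line citation of Proposition~\ref{discrRBMmixtcomps}, Lemma~\ref{strongmodeslemma}, and the normal-fan characterization of strict maximizers, which are precisely the three ingredients you use. Your explicit $\beta\to\infty$ scaling argument for the converse, and your observation that the balancing condition $\max_x\langle\Theta^\top A^{(\Ycal)}_y,A^{(\Xcal)}_x\rangle=c$ is what makes the bound uniform across codewords, correctly fill in the details the paper leaves implicit.
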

\begin{proof}\, 
This is by Proposition~\ref{discrRBMmixtcomps} and Lemma~\ref{strongmodeslemma}. 
\end{proof}

A simple consequence of the previous lemma is that if the model $\RBM_{\Xcal,\Ycal}$ is a universal approximator of distributions on $\Xcal$, 
then necessarily the number of hidden states is at least as large as the maximum code of visible states of minimum distance two, $|\Ycal|\geq\Amd_\Xcal(2)$. 
Hence discrete RBMs may not be universal approximators even when their parameter count surpasses the dimension of the ambient probability simplex. 

\begin{example}
\begin{normalfont}
Let $\Xcal=\{0,1,2\}^n$ and  $\Ycal=\{0,1, \ldots,4\}^m$. 
In this case $\Amd_\Xcal(2)=3^{n-1}$. 
If $\RBM_{\Xcal,\Ycal}$ is  a universal approximator with  $n=3$ and $n=4$, then $m\geq 2$ and $m\geq 3$, respectively, although the smallest $m$ for which $\RBM_{\Xcal,\Ycal}$ has $3^n -1$ 
parameters is $m=1$ and $m=2$, respectively. 
\end{normalfont}
\end{example}

Using Lemma~\ref{strmod1} and the analysis of \citep{MontufarMorton2012} gives the following.  
\begin{proposition}\label{strongmodepro}
If $4\lceil m/3 \rceil\leq n$, then $\RBM_{\Xcal,\Ycal}$ contains distributions with $2^m$ strong modes. 
\end{proposition}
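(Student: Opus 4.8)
The plan is to reduce to binary units and then build the distribution as a tensor product over disjoint blocks of variables, each block contributing one factor of the type analyzed in \cite{MontufarMorton2012}. First I would record that $\RBM_{\{0,1\}^n,\{0,1\}^m}$ lies in the closure $\overline{\RBM_{\Xcal,\Ycal}}$: since every $|\Xcal_i|\geq 2$ and $|\Ycal_j|\geq 2$, given binary RBM parameters $(W,B,C)$ one takes the natural parameters of $\Ecal_{\Xcal,\Ycal}$ that agree with $(W,B,C)$ on the rows of $A^{(\Xcal,\Ycal)}$ indexed by states in $\{0,1\}$ and tend to $-\infty$ on every row whose state uses a symbol $\geq 2$; the visible marginals then converge to the binary RBM marginal, extended by zero off $\{0,1\}^n$. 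Because having a strong mode is an open condition, it suffices to produce a single distribution in $\RBM_{\{0,1\}^n,\{0,1\}^m}$ with $2^m$ strong modes.

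Next I would set $L:=\lceil m/3\rceil$, split $[m]$ into $L$ groups of sizes $k_1,\dots,k_L\in\{1,2,3\}$ with $\sum_\ell k_\ell=m$, and use the hypothesis $4L=4\lceil m/3\rceil\leq n$ to choose pairwise disjoint $4$-element blocks $B_1,\dots,B_L\subseteq[n]$ of visible units, writing $B_0$ for the remaining visible units. Zeroing every interaction weight between $B_\ell$ and a hidden unit outside group $\ell$ produces an RBM distribution whose visible marginal factors as $p(x)=p^{(0)}(x_{B_0})\prod_{\ell=1}^{L}p^{(\ell)}(x_{B_\ell})$, where $p^{(\ell)}\in\RBM_{\{0,1\}^{4},\{0,1\}^{k_\ell}}$ and $p^{(0)}$ is an arbitrary product distribution on $\{0,1\}^{B_0}$. (Zero-weight hidden units and zeroed cross-block weights are legitimate RBM parameters, so this is genuinely a point of $\RBM_{\{0,1\}^n,\{0,1\}^m}$.)

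For the building block: $\{0,1\}^4$ carries the even-weight code, which has $2^3$ codewords and minimum Hamming distance $2$; pick a subcode $\Ccal^{(\ell)}$ of size $2^{k_\ell}$. The analysis of \cite{MontufarMorton2012} — equivalently, the constructive half of Lemma~\ref{strmod1}, which asks for a linear map sending the $2^{k_\ell}$ vertices of the $k_\ell$-cube into the $\Ccal^{(\ell)}$-cells of $\Fcal_{\{0,1\}^4}$ with $\max_x\langle\Theta^\top A^{(\Ycal)}_y,A^{(\Xcal)}_x\rangle$ independent of $y$ — gives, after scaling the natural parameters by $t$, a family $p^{(\ell)}_t\in\RBM_{\{0,1\}^4,\{0,1\}^{k_\ell}}$ with strong modes $\Ccal^{(\ell)}$ for which moreover $\sum_{y:d_H(x,y)=1}p^{(\ell)}_t(y)/p^{(\ell)}_t(x)\to 0$ as $t\to\infty$ for each $x\in\Ccal^{(\ell)}$; take $p^{(0)}_t$ concentrating at $0$. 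Tensoring as above, $p_t:=p^{(0)}_t\prod_\ell p^{(\ell)}_t\in\RBM_{\{0,1\}^n,\{0,1\}^m}$, and for $x$ with $x_{B_0}=0$ and $x_{B_\ell}\in\Ccal^{(\ell)}$ for all $\ell$ each Hamming-$1$ neighbour of $x$ changes exactly one block, so
\[
\frac{\sum_{y\,:\,d_H(x,y)=1}p_t(y)}{p_t(x)}
=\sum_{\ell=0}^{L}\frac{\sum_{y\,:\,d_H(x_{B_\ell},y)=1}p^{(\ell)}_t(y)}{p^{(\ell)}_t(x_{B_\ell})}\;\longrightarrow\;0\qquad(t\to\infty).
\]
Hence for $t$ large each such $x$ is a strong mode of $p_t$, and there are $\prod_\ell|\Ccal^{(\ell)}|=2^{\sum_\ell k_\ell}=2^m$ of them; the first paragraph then moves $p_t$ into $\overline{\RBM_{\Xcal,\Ycal}}$, and a final small perturbation keeps it inside $\RBM_{\Xcal,\Ycal}$ without destroying the (open) strong-mode conditions.

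The one substantive point is the building block of the third paragraph: fitting $2^3$ strong modes under only $3$ hidden units with $4$ visible units, i.e.\ hitting the even-weight code of the $4$-cube by a linear image of the $3$-cube subject to the balancing constraint of Lemma~\ref{strmod1}. This is precisely what is imported from \cite{MontufarMorton2012}, and it is also where the ratio $n/m\geq 4/3$ enters. Everything else — the binary reduction, the disjoint-block product, and the sharpening that makes strong modes multiply across blocks — is routine bookkeeping.
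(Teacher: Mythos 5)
Your proof is correct and follows essentially the same route as the paper, which gives no argument beyond invoking the analysis of \cite{MontufarMorton2012}: the condition $4\lceil m/3\rceil\leq n$ arises there precisely from the $\RBM_{4,3}$ building block realizing the $8$ even-weight codewords of $\{0,1\}^4$ as strong modes (via Lemma~\ref{strmod1}), followed by tensorization over disjoint blocks of $3$ hidden and $4$ visible units. The extra bookkeeping you supply --- the reduction of general discrete units to binary ones through the closure, and the sharpened ratio-to-zero form that makes strong modes multiply across blocks --- fills in details the paper leaves implicit but does not change the approach.
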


\section{Representational power and approximation errors}\label{section:expressive}

In this section we describe submodels of discrete RBMs and use them to provide bounds on the model approximation errors depending on the number of units and their state spaces. 
Universal approximation results follow as special cases with vanishing approximation error. 

\begin{theorem}\label{corpropouniv}
The  model  $\RBM_{\Xcal,\Ycal}$ can approximate the following arbitrarily well: 
\begin{itemize}
\item 
Any mixture of $d_\Ycal=1+\sum_{j=1}^m (|\Ycal_j|-1)$ product distributions with disjoint supports. 
\item 
When $ d_\Ycal \geq (\prod_{i\in [k]}|\Xcal_i| ) / \max_{j\in[k]}|\Xcal_j|$ for some $k\leq n$, 
any distribution from the model $\Pcal$ of distributions with constant value on each block  $\{x_1\}\times\cdots\times\{x_k\}\times\Xcal_{k+1}\times\cdots\times\Xcal_n$ for all $x_i\in\Xcal_{i}$, for all $i\in[k]$. 
\item 
Any probability distribution with support contained in the union of $d_\Ycal$ sets of the form $\{x_1\}\times\cdots\times\{x_{k-1}\}\times\Xcal_k\times\{x_{k+1}\}\times\cdots\times\{x_n\}$.  
\end{itemize}
\end{theorem}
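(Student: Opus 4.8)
\emph{Item 1.} I would prove this first and reduce the other two to it. The plan is to induct on the number $m$ of hidden units via Proposition~\ref{prodmixt}, which writes $\RBM_{\Xcal,\Ycal}=\Mcal_{\Xcal,|\Ycal_1|}\circ\cdots\circ\Mcal_{\Xcal,|\Ycal_m|}$. The base case $m=1$ is immediate, since $\RBM_{\Xcal,\Ycal}=\Mcal_{\Xcal,|\Ycal_1|}$ and $d_\Ycal=|\Ycal_1|$, so the closure of the model already contains every mixture of $d_\Ycal$ products (disjoint supports or not). The inductive step is the statement that if $\overline{\Ncal}$ contains every mixture of $a$ product distributions with pairwise disjoint supports, then $\overline{\Mcal_{\Xcal,r}\circ\Ncal}$ contains every mixture of $a+r-1$ such distributions; applying this with $\Ncal=\Mcal_{\Xcal,|\Ycal_1|}\circ\cdots\circ\Mcal_{\Xcal,|\Ycal_{m-1}|}$ and $r=|\Ycal_m|$ gives the count $a+r-1=\big(1+\sum_{j<m}(|\Ycal_j|-1)\big)+|\Ycal_m|-1=d_\Ycal$.

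For the inductive step, given a target $\sum_i\lambda_i p^{(i)}$ with disjoint product supports $S_i=\prod_\ell S_i^\ell$, I would realize it as a limit of Hadamard products $q\circ q'$ with $q\in\Mcal_{\Xcal,r}$ (an $r$-component mixture) and $q'\in\overline{\Ncal}$ (an $a$-component mixture). One chooses, for each component of $q$ and of $q'$, a box on which that product component concentrates in the limit, so that: (i) the nonempty intersections of the $q$-supports with the $q'$-supports form a bipartite forest with $r+a-1$ edges (``sharing'' one target between the two factors); (ii) these intersections refine $\{S_i\}$ (a given $S_i$ may be cut into several boxes, which is harmless since a product distribution restricted to a box-subpartition is a convex combination of products); and (iii) the surviving cross-terms $f^{(s)}\circ g^{(t)}$ match $\lambda_i p^{(i)}$ on the corresponding pieces. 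Equivalently, this builds a recursive axis-parallel subdivision of $\Xcal$ into $d_\Ycal$ cells refining $\{S_i\}$, organized into $m$ rounds (one per Hadamard factor) of at most $|\Ycal_j|-1$ cuts each.

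The hard part is exactly this bookkeeping: one must pick the concentrating boxes together with the rates at which the natural parameters diverge (which coordinate values are pushed to $-\infty$, and how fast) so that every \emph{unwanted} cross-term has total mass tending to $0$ relative to the $d_\Ycal$ retained ones. In the exponential parametrization the unwanted cross-terms are the hidden configurations $h$ with two or more nonzero coordinates, and here it is cleanest to exploit the identity $\theta_h=\beta_0+\sum_{j:\,h_j\neq0}\beta_{j,h_j}$ coming from \eqref{eq:Roth}: the retained configurations $\mathbf 0$ and $e_{j,c}$ then receive free, independent product parameters $\beta_0$ and $\beta_0+\beta_{j,c}$ realizing $\lambda_0 p^{(0)}$ and the remaining $\lambda_i p^{(i)}$, and one checks coordinatewise that for every other $h$ the log-weight $\langle\theta_h,A^{(\Xcal)}_x\rangle$ tends to $-\infty$ at every $x$.

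\emph{Items 2 and 3.} Both reduce to Item 1 by exhibiting the distributions in question as limits of mixtures of at most $d_\Ycal$ products with disjoint supports. For Item 3 the sets $\{x_1\}\times\cdots\times\{x_{k-1}\}\times\Xcal_k\times\{x_{k+1}\}\times\cdots\times\{x_n\}$ are pairwise disjoint product sets, and a distribution supported on a union of $d_\Ycal$ of them is literally a convex combination of $d_\Ycal$ product distributions (point masses in all coordinates except the $k$-th, times an arbitrary distribution on $\Xcal_k$), so Item 1 applies verbatim. For Item 2, a distribution in $\Pcal$ is $p'\otimes u$ with $p'\in\Delta(\Xcal_1\times\cdots\times\Xcal_k)$ arbitrary and $u$ uniform on $\Xcal_{k+1}\times\cdots\times\Xcal_n$; by Theorem~\ref{unimixt} and the line-partition construction behind its sufficiency bound, $p'$ is a limit of mixtures of $|\Xcal_1\times\cdots\times\Xcal_k|/\max_{j\in[k]}|\Xcal_j|\le d_\Ycal$ (the hypothesis of Item~2) product distributions with disjoint supports; tensoring each component with $u$ turns this into a mixture of at most $d_\Ycal$ disjoint-support products on $\Xcal$ converging to $p'\otimes u$, and Item 1 concludes.
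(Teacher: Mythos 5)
Your reduction of Items 2 and 3 to Item 1 is exactly the paper's: the blocks of $\Pcal$ are grouped along one coordinate of $[k]$ into $(\prod_{i\in[k]}|\Xcal_i|)/\max_{j\in[k]}|\Xcal_j|$ product distributions with disjoint supports, and a distribution on a ``line'' $\{x_1\}\times\cdots\times\Xcal_k\times\cdots\times\{x_n\}$ is itself a product. For Item 1, your ``retained configurations $\{\mathbf{0}\}\cup\{e_{j,c}\}$'' picture is also precisely the paper's argument in disguise: by Proposition~\ref{prodmixt} the model contains $p^{(0)}\circ(\mathds{1}+\tilde\lambda_1\tilde p^{(1)})\circ\cdots\circ(\mathds{1}+\tilde\lambda_m\tilde p^{(m)})$ with $\tilde p^{(j)}\in\Mcal_{\Xcal,|\Ycal_j|-1}$, and expanding this product yields one term per hidden configuration with at most one nonzero coordinate, i.e.\ exactly $1+\sum_j(|\Ycal_j|-1)=d_\Ycal$ terms. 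Given that, your induction on $m$ with the bipartite-forest/recursive-subdivision bookkeeping is an unnecessary detour: the one-shot expansion already produces all $d_\Ycal$ components simultaneously, and the inductive step you sketch is harder to make precise than the statement it is meant to prove. The one substantive divergence is where the limit is taken, and this is where your argument has a soft spot: you send natural parameters to infinity and assert that every unwanted $h$ (two or more nonzero coordinates) has log-weight tending to $-\infty$ at every $x$. That is not by itself the right criterion, because at a point $x$ in a retained support, say $x\in\supp p^{(j,c)}$, both the unwanted weight (carrying the extra factor from $\beta_{j',c'}$) and the reference weights it must be compared against involve quantities that individually tend to $0$ or $-\infty$; the conclusion depends on the relative divergence rates, which is exactly the bookkeeping you acknowledge but defer. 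The paper sidesteps this entirely by keeping the summand $\mathds{1}$ inside every Hadamard factor: each factor $(\mathds{1}+\tilde\lambda_j\tilde p^{(j)})$ is bounded below by a positive constant, so the unwanted cross-terms are literal pointwise products $\tilde p^{(j)}(x)\,\tilde p^{(j')}(x)$ of functions with disjoint supports and vanish identically rather than asymptotically, and the only limit needed is the continuous passage of each $\tilde p^{(j)}$ to the boundary of $\Mcal_{\Xcal,|\Ycal_j|-1}$. If you replace your parameter-divergence limit by that formulation, your argument closes with no rate analysis at all.
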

\begin{proof}\,  
By Proposition~\ref{prodmixt} the model $\RBM_{\Xcal,\Ycal}$ contains any Hadamard product $p^{(1)}\circ\cdots\circ p^{(m)}$ with mixtures of products as factors, $p^{(j)} \in \Mcal_{\Xcal,|\Ycal_j|}$ for all $j\in[m]$. In particular, it contains $p=p^{(0)}\circ(\mathds{1} +\tilde \lambda_1 \tilde p^{(1)} )\circ\cdots\circ(\mathds{1} +\tilde \lambda_m \tilde p^{(m)})$, where $p^{(0)}\in\Ecal_\Xcal$,  $\tilde p^{(j)}\in\Mcal_{\Xcal,|\Ycal_j|-1}$, and $\tilde \lambda_j\in\R_+$. 
Choosing the factors $\tilde p^{(j)}$ with pairwise disjoint supports shows that  $p=\sum_{j=0}^m\lambda_j p^{(j)}$, whereby $p^{(0)}$ can be any product distribution and $p^{(j)}$ can be any distribution from $\Mcal_{\Xcal,|\Ycal_j|-1}$ for all $j\in[m]$, as long as $\supp(p^{(j)})\cap\supp(p^{(j')})$ for all $j\neq j'$. This proves the first item.  

For the second item: 
Any point in the set $\Pcal$ is a mixture of uniform distributions supported on the disjoint blocks $\{x_1\}\times\cdots\times\{x_k\}\times\Xcal_{k+1}\times\cdots\times\Xcal_n$ for all $(x_1,\ldots, x_k)\in\Xcal_1\times\cdots\times\Xcal_k$. 
Each of these uniform distributions is a product distribution, since it factorizes as 
$p_{x_1,\ldots,x_k}=\prod_{i\in[k]}\delta_{x_i} \prod_{i\in[n]\setminus[k]}u_i$, where $u_i$ denotes the uniform distribution on $\Xcal_i$. 
For any  $j\in[k]$ any mixture $\sum_{x_j\in\Xcal_j} \lambda_{x_j} p_{x_1,\ldots,x_k}$ is also a product distribution, since it factorizes as
\begin{equation}
\Big(\sum_{x_j\in\Xcal_j} \lambda_{x_j}\delta_{x_j}\Big) \prod_{i\in[k]\setminus\{j\}}\delta_{x_i}
 \prod_{i\in[n]\setminus[k]}u_i . \label{mixtprod}
 \end{equation}
Hence  any distribution from the set $\Pcal$ is a mixture of $(\prod_{i\in[k]}|\Xcal_i|)/\max_{j\in[k]}|\Xcal_j|$ product distributions with disjoint supports. The claim now follows from the first item. 

For the third item: 
The model $\overline{\Ecal_\Xcal}$ contains any distribution with support of the form $\{x_1\}\times\cdots\times\{x_{k-1}\}\times\Xcal_k\times\{x_{k+1}\}\times\cdots\times\{x_n\}$. 
Hence, by the first item, the RBM model can approximate any distribution arbitrarily well whose support can be covered by $d_\Ycal$ sets of that form. 
\end{proof}

We now analyse the RBM model approximation errors.  
Let $p$ and $q$ be two probability distributions on $\Xcal$. 
The Kullback-Leibler divergence from $p$ to $q$ is defined as $D(p\|q):=\sum_{x\in\Xcal} p(x) \log \frac{p(x)}{q(x)}$ when $\supp(p)\subseteq\supp(q)$ and $D(p\|q):=\infty$ otherwise. 
The divergence from $p$ to a model $\Mcal\subseteq\Delta(\Xcal)$ is defined as $D(p\|\Mcal):=\inf_{q\in \Mcal}D(p\| q)$ and the maximal approximation error of $\Mcal$ is  $\sup_{p\in\Delta(\Xcal)}D(p\|\Mcal)$. 

The maximal approximation error of the independence model $\Ecal_\Xcal$ satisfies $\sup_{p\in\Delta(\Xcal)} D(p\| \Ecal_\Xcal) \leq |\Xcal|/ \max_{i\in[n]}|\Xcal_i|$, with equality when all units have the same number of states~\citep[see][Corollary~4.10]{AyKnauf06:Maximizing_Multiinformation}. 

\begin{theorem}\label{approxerrordiscreteRBM}
If $\prod_{i\in[n]\setminus\Lambda}|\Xcal_{i}| \leq 1+\sum_{j\in[m]}(|\Ycal_j|-1)=d_\Ycal$ for some $\Lambda\subseteq [n]$, then the Kullback-Leibler divergence from any distribution $p$ on $\Xcal$
 to the model $\RBM_{\Xcal,\Ycal}$ is bounded by 
\begin{equation*}
 D(p\|\RBM_{\Xcal,\Ycal}) \leq  \log \frac{\prod_{i\in\Lambda}|\Xcal_{i}|}{\max_{i\in\Lambda}{|\Xcal_i|}}  .
\end{equation*}
In particular, \label{universaldiscreteRBM}
the model $\RBM_{\Xcal,\Ycal}$ is a universal approximator 
whenever $d_\Ycal \geq {|\Xcal|}/{\max_{i\in[n]}|\Xcal_i|}$.  
\end{theorem}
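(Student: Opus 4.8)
The plan is to exhibit an explicitly described submodel of $\RBM_{\Xcal,\Ycal}$ to which the Kullback--Leibler divergence from an arbitrary $p$ is easy to estimate, and then to choose that submodel so that the error term comes out as stated. For $B\subseteq[n]$ write $\Xcal_B=\prod_{i\in B}\Xcal_i$, let $u_{B^c}$ denote the uniform distribution on $\Xcal_{B^c}=\prod_{i\in[n]\setminus B}\Xcal_i$, and put $\Pcal_B:=\{\,r\otimes u_{B^c}\colon r\in\Delta(\Xcal_B)\,\}$. After normalization, a probability distribution that is constant on every block $\prod_{i\in B}\{x_i\}\times\prod_{i\in B^c}\Xcal_i$ is precisely one of the form $r\otimes u_{B^c}$, so the second item of Theorem~\ref{corpropouniv} (stated there for $B=[k]$, but invariant under relabeling the visible variables) gives $\Pcal_B\subseteq\overline{\RBM_{\Xcal,\Ycal}}$ whenever $d_\Ycal\geq\prod_{i\in B}|\Xcal_i|/\max_{i\in B}|\Xcal_i|$.

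Next I would carry out the elementary entropy computation for the divergence to $\Pcal_B$. For any $p\in\Delta(\Xcal)$ and $r\in\Delta(\Xcal_B)$,
\[
D(p\,\|\,r\otimes u_{B^c})=-H(p)-\sum_{x_B}p_B(x_B)\log r(x_B)+\log\prod_{i\in B^c}|\Xcal_i|,
\]
where $p_B$ is the $\Xcal_B$-marginal of $p$. By Gibbs' inequality the middle term is minimal at $r=p_B$, and the chain rule $H(p)=H(p_B)+H(X_{B^c}\mid X_B)$ then yields
\[
D(p\|\Pcal_B)=\log\prod_{i\in B^c}|\Xcal_i|-H(X_{B^c}\mid X_B)\leq\log\prod_{i\in B^c}|\Xcal_i|,
\]
all entropies taken with respect to $p$. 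Replacing $\overline{\RBM_{\Xcal,\Ycal}}$ by $\RBM_{\Xcal,\Ycal}$ in the infimum is routine: perturb $p_B$ to a full-support $r$, so the approximant lies in the interior of the simplex where $D(p\|\cdot)$ is continuous, and then let the perturbation vanish.

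The one step requiring care is the choice of which coordinates to hand to the uniform factor. I would take $B=\bigl([n]\setminus\Lambda\bigr)\cup\{j_0\}$, where $j_0\in\Lambda$ attains $\max_{i\in\Lambda}|\Xcal_i|$, so that $B^c=\Lambda\setminus\{j_0\}$. Since $|\Xcal_{j_0}|$ is one of the cardinalities over which the maximum in the denominator runs,
\[
\frac{\prod_{i\in B}|\Xcal_i|}{\max_{i\in B}|\Xcal_i|}\;\leq\;\frac{\bigl(\prod_{i\in[n]\setminus\Lambda}|\Xcal_i|\bigr)\cdot|\Xcal_{j_0}|}{|\Xcal_{j_0}|}\;=\;\prod_{i\in[n]\setminus\Lambda}|\Xcal_i|\;\leq\;d_\Ycal
\]
by hypothesis, hence $\Pcal_B\subseteq\overline{\RBM_{\Xcal,\Ycal}}$ and
\[
D(p\|\RBM_{\Xcal,\Ycal})\leq D(p\|\Pcal_B)\leq\log\prod_{i\in\Lambda\setminus\{j_0\}}|\Xcal_i|=\log\frac{\prod_{i\in\Lambda}|\Xcal_i|}{\max_{i\in\Lambda}|\Xcal_i|}.
\]
The ``in particular'' assertion is then the special case $\Lambda=\{j^*\}$ with $j^*\in\arg\max_{i\in[n]}|\Xcal_i|$: the hypothesis reads $d_\Ycal\geq|\Xcal|/\max_{i\in[n]}|\Xcal_i|=\prod_{i\in[n]\setminus\{j^*\}}|\Xcal_i|$, and the bound collapses to $D(p\|\RBM_{\Xcal,\Ycal})\leq\log 1=0$ for every $p$. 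Thus the main obstacle is not any hard estimate but the bookkeeping in this last paragraph: moving the largest-cardinality coordinate $j_0$ of $\Lambda$ out of the coordinates that $\Pcal_B$ forces to be uniform and into the block $B$ on which the distribution is unconstrained removes the factor $|\Xcal_{j_0}|$ from the error term while, because $|\Xcal_{j_0}|$ already appears in $\max_{i\in B}|\Xcal_i|$, causing no net increase in the requirement of Theorem~\ref{corpropouniv} — so that the hypothesis $\prod_{i\in[n]\setminus\Lambda}|\Xcal_i|\leq d_\Ycal$ is exactly what is needed.
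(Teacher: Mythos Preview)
Your proposal is correct and follows essentially the same approach as the paper: invoke the partition-model submodel from the second item of Theorem~\ref{corpropouniv}, then bound $D(p\|\RBM_{\Xcal,\Ycal})$ by the maximal divergence to that partition model, which equals the logarithm of the largest block size. The paper's proof simply cites an external reference for the partition-model divergence and leaves the choice of the block set implicit, whereas you carry out the entropy computation by hand and make the choice $B=([n]\setminus\Lambda)\cup\{j_0\}$ explicit; these are expository differences, not mathematical ones.
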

\begin{proof}\,  
The submodel $\Pcal$ of $\RBM_{\Xcal,\Ycal}$ described in the second item of Theorem~\ref{corpropouniv} is a {\em partition model}. 
The maximal divergence from such a model is equal to the logarithm of the cardinality of the largest block with constant values~\citep[see][]{MatusAy03:On_Maximization_of_the_Information_Divergence}. 
Thus $\max_p D(p\|\RBM_{\Xcal,\Ycal})\leq \max_p D(p\|\Pcal) = \log \left( ( \prod_{i\in\Lambda}|\Xcal_{i}|) / \max_{i\in\Lambda}{|\Xcal_i|} \right)$, as was claimed. 
\end{proof}

\begin{figure}
\centering
\setlength{\unitlength}{15cm}
\begin{picture}(1.2,.395)
\put(0.04,0){\includegraphics[clip=true, trim= 1.5cm 10.1cm 1cm 10cm, scale=.77]{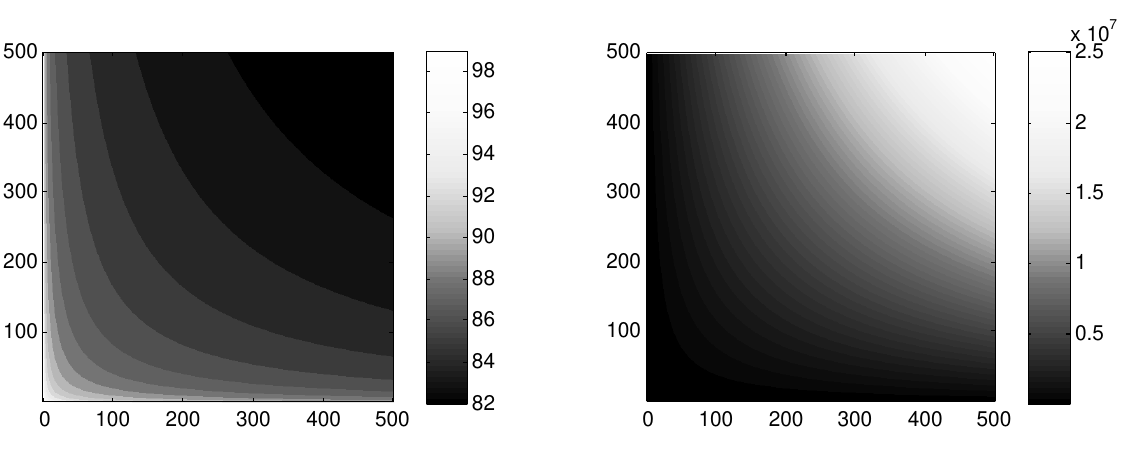}} 
\put(0.03,.2){\begin{rotate}{90}$k$\end{rotate}}
\put(.22,0){$m$}
\put(0.12,.375){Maximal-error bound}
\put(.558,.2){\begin{rotate}{90}$k$\end{rotate}}
\put(.742,0){$m$}
\put(0.675,.375){Nr. parameters}
\end{picture}
\caption{Illustration of Theorem~\ref{approxerrordiscreteRBM}. 
The left panel shows a heat map of the upper bound on the Kullback-Leibler approximation errors of discrete RBMs with 100 visible binary units and the right panel shows a map of the total number of model parameters, both depending on the number of hidden units $m$ and their possible states $k=|\Ycal_j|$ for all $j\in[m]$. 
}\label{errorbounds}
\end{figure}
Theorem~\ref{approxerrordiscreteRBM} shows that, on a large scale, the maximal model approximation error of $\RBM_{\Xcal,\Ycal}$ is smaller than that of the independence model $\Ecal_\Xcal$ by at least $\log(1+\sum_{j\in[m]}(|\Ycal_j|-1))$, or vanishes. 
The theorem is illustrated in Figure~\ref{errorbounds}. 
The line $k=2$ shows bounds on the approximation error of binary RBMs with $m$ hidden units, previously treated in~\citep[][Theorem~5.1]{NIPS2011_0307}, and the line $m=1$ shows bounds for na\"ive Bayes models with $k$ hidden classes.

\section{Dimension}\label{section:algebraic}

In this section we study the dimension of the model $\RBM_{\Xcal,\Ycal}$. 
One reason RBMs are attractive is that they have a large learning capacity, e.g.\ may be built with millions of parameters.  Dimension calculations show whether those parameters are wasted, or translate into higher-dimensional spaces of representable distributions. 
Our analysis builds on previous work by~\citet*{Cueto2010}, where binary RBMs are treated.  
The idea is to bound the dimension from below by the dimension of a related max-plus model, called the tropical RBM model~\citep{tropical}, 
and from above by the dimension expected from counting parameters.

The dimension of a discrete RBM model can be bounded from above not only by its expected dimension, but also by a function of the dimension of its Hadamard factors:  
\begin{proposition}\label{remdimu}
The dimension of $\RBM_{\Xcal,\Ycal}$ is bounded as 
\begin{equation}
\dim(\RBM_{\Xcal,\Ycal})\leq \dim(\Mcal_{\Xcal,|\Ycal_i|}) + \sum_{j\in[m]\setminus\{i\}}\dim(\Mcal_{\Xcal, |\Ycal_j|-1})+(m-1) \quad\text{for all $i\in[m]$} .   
\label{remdimueq}
\end{equation}
\end{proposition}
\begin{proof}\,  
Let $u$ denote the uniform distribution. 
Note that $\Ecal_\Xcal \circ\Ecal_\Xcal= \Ecal_\Xcal$ and also $\Ecal_\Xcal\circ \Mcal_{\Xcal,k}=\Mcal_{\Xcal,k}$. 
This observation, together with Proposition~\ref{prodmixt}, shows that the RBM model can be factorized as 
\begin{equation*}
\RBM_{\Xcal,\Ycal} = (\Mcal_{\Xcal,|\Ycal_1|})\circ (\lambda_1 u+(1- \lambda_1) \Mcal_{\Xcal,|\Ycal_1|})\circ \cdots\circ(\lambda_m u+ (1-\lambda_m)\Mcal_{\Xcal,|\Ycal_m|-1}) , 
\end{equation*}
from which the claim follows. 
\end{proof}

By the previous proposition, the model $\RBM_{\Xcal,\Ycal}$ can have the expected dimension only if (i) the right hand side of eq.~\eqref{remdimueq} equals $|\Xcal|-1$, or (ii) each mixture model $\Mcal_{\Xcal, k}$ has the expected dimension 
for all $k\leq \max_{j\in[m]}|\Ycal_j|$. 
Sometimes none of both conditions is satisfied and the models `waste' parameters: 

\begin{example} 
\begin{normalfont}
The $k$-mixture of the independence model on $\Xcal_1\times\Xcal_2$ is a subset of the set of $|\Xcal_1|\times|\Xcal_2|$ matrices with non-negative entries and rank at most $k$. 
It is known that the set of $M\times N$ matrices of rank at most $k$ has dimension $k(M+N-k)$ for all $1\leq k<\min\{M,N\}$. 
Hence 
the model $\Mcal_{\Xcal_1\times\Xcal_2,k}$ has dimension smaller than its parameter count whenever $1< k < \min\{|\Xcal_1|,|\Xcal_2|\}$. 
By Proposition~\ref{remdimu} if $(\sum_{j\in[m]}(|\Ycal_j|-1)+1)(|\Xcal_1|+|\Xcal_2|-1) \leq |\Xcal_1\times\Xcal_2|$ and $1<|\Ycal_j|\leq \min\{|\Xcal_1|,|\Xcal_2|\}$ for some $j\in[m]$, then $\RBM_{\Xcal_1\times\Xcal_2,\Ycal}$ does not have the expected dimension. 
\end{normalfont}
\end{example}

The next theorem indicates choices of $\Xcal$ and $\Ycal$ for which the model $\RBM_{\Xcal,\Ycal}$ has the expected dimension. 
Given a sufficient statistics matrix $A^{(\Xcal)}$, we say that a set $\Zcal\subseteq\Xcal$ has full rank when the matrix with columns $\{A^{(\Xcal)}_x\colon x\in\Zcal\}$ has full rank.  
\begin{theorem}\label{theorem:dimension}
When $\Xcal$ contains $m$ disjoint Hamming balls of radii \mbox{$2(|\Ycal_j|-1)-1$}, $j\in[m]$ 
and the subset of $\Xcal$ not intersected by these balls has full rank, then the model $\RBM_{\Xcal,\Ycal}$ has dimension equal to the number of model parameters,  $$\dim(\RBM_{\Xcal,\Ycal}) = (1+\sum_{i\in[n]}(|\Xcal_i|-1))(1+\sum_{j\in[m]}(|\Ycal_j|-1))-1. $$  
On the other hand, if $m$ Hamming balls of radius one cover $\Xcal$, then   $$\dim(\RBM_{\Xcal,\Ycal})=|\Xcal|-1. $$  
\end{theorem}

In order to prove this theorem we will need two main tools: 
slicings by normal fans of simplices, described in Section~\ref{section:inference}, and the tropical RBM model, described in Section~\ref{section:tropical model}. 
The theorem will follow from the analysis contained in Section~\ref{section:tropical model}. 

\section{Tropical model}\label{section:tropical model}

\begin{definition}
\begin{normalfont}
The tropical model $\RBM_{\Xcal,\Ycal}^{\text{\normalfont tropical}}$ is the image of the tropical morphism 
\begin{equation}
\R^{d_\Xcal  d_\Ycal}\ni\theta\;\;\mapsto\;\; \Phi(v;\theta)= 
\max\{\langle\theta, A^{(\Xcal,\Ycal)}_{(v,h)}\rangle\colon h\in\Ycal\} \quad \text{ for all }v\in\Xcal, 
\end{equation}
which evaluates  
$\log( \frac{1}{Z(\theta)} \sum_{h\in\Ycal} \exp(\langle \theta,A^{(\Xcal,\Ycal)}_{(v,h)}\rangle))$
for all $v\in\Xcal$ for each  $\theta$  
within the max-plus algebra (addition becomes $a+b=\max\{a,b\}$) up to additive constants independent of $v$ (i.e., disregarding the normalization factor $Z(\theta)$).  
\end{normalfont}
\end{definition}
The idea behind this definition is that $\log(\exp(a)+\exp(b))\approx\max\{a,b\}$ when $a$ and $b$ have different order of magnitude. 
The tropical model captures important properties of the original model. Of particular interest is following consequence of the Bieri-Groves theorem~\citep[see][]{Draisma}, which gives us a tool to estimate the dimension of $\RBM_{\Xcal,\Ycal}$: 
\begin{gather}
\dim(\TRBM_{\Xcal,\Ycal})\leq \dim(\RBM_{\Xcal,\Ycal}) \leq\min\{\dim( \Ecal_{\Xcal,\Ycal}),|\Xcal|-1\} .  \label{dimbounds}
\end{gather}  

The following Theorem~\ref{tropicalRBM} describes the regions of linearity of the map $\Phi$. 
Each of these regions corresponds to a collection of $\Ycal_j$-slicings (see Definition~\ref{defslicing}) of the set $\{ A^{(\Xcal)}_x\colon x\in\Xcal\}$ 
for all $j\in[m]$. 
This result allows us to express the dimension of $\TRBM_{\Xcal,\Ycal}$ as the maximum rank of a class of matrices defined by collections of slicings.

For each $j\in[m]$ let $C_j=\{C_{j,1},\ldots,C_{j, |\Ycal_j|}\}$ be a $\Ycal_j$-slicing of $\{A^{(\Xcal)}_x\colon x\in\Xcal\}$ and let $A_{C_{j,k}}$ be the $|\Xcal|\times d_\Xcal$-matrix with $x$-th row equal to $( A^{(\Xcal)}_x)^\top$ when $x \in C_{j,k}$ and equal to a row of zeros otherwise. Let $A_{C_j}=(A_{C_{j,1}}|\cdots|A_{C_{j,|\Ycal_j|}}) \in \R^{|\Xcal|\times |\Ycal_j|d_\Xcal}$ and $d =  \sum_{j\in[m]} |\Ycal_j|d_\Xcal$. 

\begin{theorem}\label{tropicalRBM} 
On each region of linearity, 
the tropical morphism $\Phi$ is the linear map $\R^{d}\to \TRBM_{\Xcal,\Ycal}$ represented by the $|\Xcal|\times d$-matrix 
\begin{equation*}
\Acal = (A_{C_1}|\cdots|A_{C_m}),
\end{equation*}
modulo constant functions. 
In particular, $\dim(\TRBM_{\Xcal,\Ycal})+1$ is the maximum rank of $\Acal$ over all possible collections of slicings $C_1, \ldots, C_m$. 
\end{theorem}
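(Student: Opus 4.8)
The plan is to turn the piecewise-linear map $\Phi$ into a sum of $m$ ``single hidden unit'' tropical morphisms, match its pieces of linearity with tuples of slicings, and read off the dimension as a maximal rank. First I would pass to the homogeneous parametrization of $\Ecal_{\Xcal,\Ycal}$ used after Proposition~\ref{prodmixt}, writing $\theta=(\vartheta_{j,b})_{j\in[m],\,b\in\Ycal_j}\in\R^{d}$ with each block $\vartheta_{j,b}\in\R^{d_\Xcal}$; this defines the same model, and (exactly as in the proof of Proposition~\ref{prodmixt}, but reading $\log\sum\exp$ as $\max$) the energy decomposes as $\langle\theta,A^{(\Xcal,\Ycal)}_{(v,h)}\rangle=\sum_{j\in[m]}\langle\vartheta_{j,h_j},A^{(\Xcal)}_v\rangle$. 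Because the $j$-th summand depends on $h$ only through $h_j$ and $\Ycal=\prod_j\Ycal_j$, the maximum over $h$ separates:
\[
\Phi(v;\theta)=\max_{h\in\Ycal}\sum_{j\in[m]}\langle\vartheta_{j,h_j},A^{(\Xcal)}_v\rangle=\sum_{j\in[m]}\max_{b\in\Ycal_j}\langle\vartheta_{j,b},A^{(\Xcal)}_v\rangle=:\sum_{j\in[m]}\phi_j(v;\vartheta_j).
\]

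Next I would identify the regions of linearity. The term $\phi_j$ depends only on the block $\vartheta_j$, and for generic $\vartheta_j$ each $v$ has a unique maximizer $b_j(v)$; since $\langle\vartheta_{j,b},A^{(\Xcal)}_v\rangle=\langle\Theta_j A^{(\Xcal)}_v,A^{(\Ycal_j)}_b\rangle$ for the linear map $\Theta_j$ determined by $\Theta_j^\top A^{(\Ycal_j)}_b=\vartheta_{j,b}$ (the vertices $A^{(\Ycal_j)}_b$ form a basis of $\R^{d_{\Ycal_j}}$), the partition $v\mapsto b_j(v)$ is exactly a $\Ycal_j$-slicing $C_j=\{C_{j,1},\dots,C_{j,|\Ycal_j|}\}$ of $\{A^{(\Xcal)}_v\}$ in the sense of Definition~\ref{defslicing}. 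As the blocks $\vartheta_1,\dots,\vartheta_m$ are independent coordinates, the generic regions of linearity of $\Phi$ are in bijection with tuples $(C_1,\dots,C_m)$ of $m$ slicings, and conversely every such tuple is realized by taking $\vartheta_{j,b}=\Theta_j^\top A^{(\Ycal_j)}_b$ for a generic $\Theta_j$ realizing $C_j$, which lies in the interior of an open, hence full-dimensional, region.

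On the region attached to $(C_1,\dots,C_m)$ one has $\phi_j(v)=\langle\vartheta_{j,k},A^{(\Xcal)}_v\rangle$ whenever $v\in C_{j,k}$, which is precisely the $v$-th entry of $A_{C_j}\vartheta_j$, where $A_{C_j}=(A_{C_{j,1}}|\cdots|A_{C_{j,|\Ycal_j|}})$ and $A_{C_{j,k}}$ is the $|\Xcal|\times d_\Xcal$ matrix $(A^{(\Xcal)})^\top$ with the rows indexed by $v\notin C_{j,k}$ set to zero. Summing over $j$ gives $\Phi(v;\theta)=(\Acal\vartheta)_v$ with $\Acal=(A_{C_1}|\cdots|A_{C_m})$, i.e.\ $\Phi$ restricted to this region is the stated linear map modulo constants. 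For the dimension count: the region is a nonempty open cone in $\R^{d}$, so its image under $\vartheta\mapsto\Acal\vartheta$ is open, hence full-dimensional, in the column space of $\Acal$; the columns of $A_{C_j}$ coming from the all-ones row of $A^{(\Xcal)}$ sum over $k$ to $\mathds{1}_\Xcal$, so the constant function lies in that column space, and after quotienting by it this piece of $\TRBM_{\Xcal,\Ycal}$ has dimension $\operatorname{rank}(\Acal)-1$. Since $\TRBM_{\Xcal,\Ycal}$ is the union of these finitely many pieces, $\dim(\TRBM_{\Xcal,\Ycal})=\max\{\operatorname{rank}(\Acal)-1\}$ over all tuples of $m$ slicings, which is the assertion.

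The step I expect to be the main obstacle is the first one: making the passage to the homogeneous parametrization rigorous and confirming that it defines the same tropical model — in the original parametrization the blocks satisfy $\vartheta_{j,0}=0$, and one must check that dropping these constraints changes $\Phi$ only by a shared affine (``visible bias'') term, which is invisible modulo constants — together with the correct grouping of the bipartite energy $\theta_0+\sum_i\theta_i^{(v_i)}+\sum_j\bigl(\theta_j^{(h_j)}+\sum_i\theta_{ij}^{(v_i,h_j)}\bigr)$ into the blocks $\vartheta_{j,\cdot}$. Once that is in place, the separation of the maximum, the slicing correspondence, and the rank computation are essentially formal.
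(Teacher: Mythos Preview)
Your proposal is correct and follows essentially the same approach as the paper: pass to the homogeneous parametrization (as in Proposition~\ref{prodmixt}), use that $\Ycal=\prod_j\Ycal_j$ to separate $\max_h\sum_j(\cdots)$ into $\sum_j\max_{h_j}(\cdots)$, and then read off the linear pieces as the blocks $A_{C_j}$. The paper's proof is extremely terse---it records only the separation identity and leaves the identification of the linearity regions with tuples of slicings and the rank-to-dimension step implicit---whereas you spell these out; your worry about the homogeneous parametrization is exactly the point the paper dispatches by citing Proposition~\ref{prodmixt}.
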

\begin{proof}\,  
Again use the homogeneous version of the matrix $A^{(\Xcal,\Ycal)}$ as in the proof of Proposition \ref{prodmixt}; this will not affect the rank of $\mathcal{A}$.  
Let $\theta_{h_j}=(\theta_{\{j,i\},(h_j,x_i)})_{i\in[n], x_i\in\Xcal_i}$ and let $A_{h_j}$ denote the submatrix of $A^{(\Xcal,\Ycal)}$ containing the rows with indices $\{\{j,i\},(h_j,x_i)\colon {i\in[n], x_i\in\Xcal_i}\}$. 
For any given $v\in\Xcal$ we have 
\begin{equation*}
\max\Big\{\big\langle \theta, A^{(\Xcal,\Ycal)}_{(v,h)}\big\rangle \colon h\in\Ycal\Big\} 
=\sum_{j\in[m]} \max\Big\{\big\langle \theta_{h_j}, A_{h_j} (v,h_j)\big\rangle\colon h_j\in\Ycal_j\Big\}, 
\end{equation*}
from which the claim follows.
\end{proof}

In the following we evaluate the maximum rank of the matrix $\Acal$ 
for various choices of $\Xcal$ and $\Ycal$ by examining good slicings. 
We focus on slicings by parallel hyperplanes. 

\begin{lemma} \label{lem:lemma25}
For any $x^\ast\in \Xcal$ and $0<k<n$ the affine hull of the set $\{A_x^{(\Xcal)}\colon d_H(x,x^\ast)=k\}$ has dimension $\sum_{i\in[n]}(|\Xcal_i|-1)-1$. 
\end{lemma}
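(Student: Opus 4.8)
The plan is to reduce the statement to a linear-algebra fact about the sufficient-statistics matrix $A^{(\Xcal)}$ restricted to a Hamming sphere. Recall that $A^{(\Xcal)}_x\in\R^{d_\Xcal}$ has the constant coordinate $1$ together with, for each $i\in[n]$ and each $y_i\in\Xcal_i\setminus\{0\}$, the indicator coordinate $\mathds 1_{\{x_i=y_i\}}$. So $d_\Xcal=1+\sum_i(|\Xcal_i|-1)$, and the claim is that the points $\{A^{(\Xcal)}_x : d_H(x,x^\ast)=k\}$ span an affine subspace of dimension $\sum_i(|\Xcal_i|-1)-1 = d_\Xcal-2$. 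Since these points all lie on the hyperplane where the constant coordinate equals $1$, the affine span has dimension at most $d_\Xcal-1$; the content is that it is exactly one less, i.e.\ the points satisfy exactly one further affine relation. First I would identify that relation: summing the "$x_i$-group" of coordinates $\sum_{y_i\ne 0}\mathds 1_{\{x_i=y_i\}} = \mathds 1_{\{x_i\ne 0\}}$ counts, over all $i$, the number of coordinates in which $x$ differs from the all-zeros word; comparing this count at $x$ with the same count at $x^\ast$ and using $d_H(x,x^\ast)=k$ shows $\sum_{i}\mathds 1_{\{x_i\ne 0\}}$ is an affine function of the coordinates that takes a constant value (depending on $x^\ast$ and $k$, but not on $x$) on the whole sphere. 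Hence the affine span has dimension at most $d_\Xcal-2$.

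For the lower bound I would exhibit $d_\Xcal-1$ affinely independent points on the sphere $\{x : d_H(x,x^\ast)=k\}$, equivalently show the differences $A^{(\Xcal)}_x - A^{(\Xcal)}_{x_0}$ for a suitable base point $x_0$ span a $(d_\Xcal-2)$-dimensional space. A clean way: fix $x^\ast$ and, by relabeling the states of each $\Xcal_i$ if necessary, assume $x^\ast=0$, so the sphere is $S_k=\{x: |\{i:x_i\ne 0\}|=k\}$. Pick a base word $x_0$ with support $\{1,\dots,k\}$ and, say, all its nonzero entries equal to $1$. Now generate enough neighbors: (a) for each $i\le k$ and each $c\in\Xcal_i\setminus\{0,1\}$, change coordinate $i$ of $x_0$ from $1$ to $c$ — this moves by $A^{(\Xcal)}$ along the "$(i,c)$ vs. $(i,1)$" direction; (b) for each pair $(i,j)$ with $i\le k<j\le n$, move the support from $i$ to $j$, i.e.\ set coordinate $i$ to $0$ and coordinate $j$ to $1$ — this is still in $S_k$ and contributes the direction $\be_{(j,1)}-\be_{(i,1)}$; and (c) for $j>k$ and $c\in\Xcal_j\setminus\{0,1\}$, combine a support-move into $j$ with setting the value at $j$ to $c$, contributing $\be_{(j,c)}-\be_{(i,1)}$. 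Using $0<k<n$ guarantees both a nonempty support and a nonempty complement, so moves of type (b)/(c) are available. A short bookkeeping argument shows these difference vectors span all of $\{v\in\R^{d_\Xcal}: v_{\text{const}}=0,\ \sum_{i}\sum_{c\ne0}v_{(i,c)}=0\}$, which is exactly the $(d_\Xcal-2)$-dimensional space left available after the upper bound.

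The main obstacle I expect is the spanning verification in the lower bound: making sure the chosen collection of sphere points really is affinely full-dimensional modulo the single relation, without an exhaustive case check. I would organize it by first noting that moves of type (a) give all directions $\be_{(i,c)}-\be_{(i,1)}$ for $i\le k$, moves of type (b) give $\be_{(j,1)}-\be_{(i,1)}$ for $j>k$ (hence all differences among the "$\be_{(\cdot,1)}$" coordinates across groups), and moves of type (c) then promote to all $\be_{(j,c)}-\be_{(i,1)}$ for $j>k$; together these clearly span the kernel of the two linear functionals "constant coordinate" and "total indicator sum," which has dimension $d_\Xcal-2$. Combined with the upper bound this pins the affine dimension to exactly $\sum_i(|\Xcal_i|-1)-1$, as claimed. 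The reduction to $x^\ast=0$ is harmless because permuting the state labels within each $\Xcal_i$ is realized by a permutation of the indicator coordinates (with a corresponding affine change), which preserves affine dimension.
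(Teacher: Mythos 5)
Your proof is correct, but it takes a different route from the paper's. The paper argues geometrically: after the (implicit) reduction to $x^\ast=0$, the sphere $\{A^{(\Xcal)}_x\colon d_H(x,0)=k\}$ is exactly the set of vertices of $Q_\Xcal$ lying on the hyperplane $H^k=\{z\colon\langle\mathds{1},z\rangle=k+1\}$; since $\langle\mathds{1},\cdot\rangle$ takes values differing by at most one on the endpoints of any edge of $Q_\Xcal$, the hyperplane slices no edge, so $Q_\Xcal\cap H^k=\conv(\Zcal^k)$; and since $\Zcal^k$ lies in no proper face (this is where $0<k<n$ enters), the section meets the interior of $Q_\Xcal$ and therefore has dimension $\dim(Q_\Xcal)-1$. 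You instead prove the two inequalities by hand: the upper bound via the single extra affine relation $\sum_i\sum_{c\neq 0}\mathds{1}_{\{x_i=c\}}=k$, and the lower bound by exhibiting difference vectors of types (a), (b), (c) whose index pairs form a connected graph on the $\sum_i(|\Xcal_i|-1)$ non-constant coordinates, hence span the full zero-sum subspace. Your approach is more elementary and makes the exact affine relation explicit, at the cost of the spanning bookkeeping; the paper's is shorter and reuses the polytope picture that drives the rest of Section 6. One presentational caveat: your identification of the extra relation is stated before the reduction to $x^\ast=0$, but as written it is false for general $x^\ast$ (e.g.\ in $\{0,1,2\}^2$ with $x^\ast=(1,1)$ and $k=1$, the sphere contains both $(0,1)$ and $(2,1)$, with different values of $\sum_i\mathds{1}_{\{x_i\neq 0\}}$); you should perform the state-relabeling first and then run both halves of the argument, which is exactly the fix your own final remark already supplies.
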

\begin{proof}\,  
Without loss of generality let $x^\ast=(0,\ldots, 0)$. 
The set $\Zcal^k:=\{A_x^{(\Xcal)}\colon d_H(x,x^\ast)=k\}$ is the intersection of $\{A_x^{(\Xcal)}\colon x\in\Xcal \}$ with the hyperplane $H^k:=\{z \colon \langle \mathds{1}, z\rangle=k+1\}$. 
Now note that the two vertices of an edge of $Q_\Xcal$ either lie in the same hyperplane $H^l$, or in two adjacent parallel hyperplanes $H^l$ and $H^{l+1}$, with $l\in\mathbb{N}$. 
Hence the hyperplane $H^k$ does not slice any edges of $Q_\Xcal$ and $\operatorname{conv}(\Zcal^k)=Q_\Xcal\cap H^k$.  
The set $\Zcal^k$ is not contained in any proper face of $Q_\Xcal$ and hence $\operatorname{conv}(\Zcal^k)$ intersects the interior of $Q_\Xcal$. Thus $\dim(\operatorname{conv}(\Zcal^k)) = \dim(Q_\Xcal)-1$, as was claimed. 
\end{proof}

Lemma \ref{lem:lemma25} implies the following. 
\begin{corollary}\label{lem:lemma24}
Let  $x\in\Xcal$, and  $2k-3\leq n$. 
There is a slicing $C_1=\{C_{1,1},\ldots,C_{1,k}\}$ of $\Xcal$ by $k-1$ parallel hyperplanes such that $\cup_{l=1}^{k-1}C_{1,l}=B_x(2k-3)$ is the Hamming ball of radius $2k-3$ centered at $x$ and the matrix $A_{C_1}=(A_{C_{1,1}}|\cdots|A_{C_{1,k-1}})$ has full rank. 
\end{corollary}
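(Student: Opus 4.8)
The plan is to cut the vertex set $\{A^{(\Xcal)}_v\colon v\in\Xcal\}$ by $k-1$ parallel hyperplanes transverse to the ``Hamming-distance direction'' and then read off $\operatorname{rank}(A_{C_1})$ from the block structure this produces. By the relabelling symmetry within each $\Xcal_i$ (already implicit in the statement of Lemma~\ref{lem:lemma25} for an arbitrary base point) we may assume $x=(0,\dots,0)$; then $\langle\mathds{1},A^{(\Xcal)}_v\rangle=1+d_H(x,v)$, since the coordinates of $A^{(\Xcal)}_v$ sum to $1$ (the constant row) plus the number of nonzero entries of $v$. Take the hyperplanes $\{z\colon\langle\mathds{1},z\rangle=2l+\tfrac12\}$, $l=1,\dots,k-1$. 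They are generic ($\langle\mathds{1},A^{(\Xcal)}_v\rangle$ is always an integer), so by Proposition~\ref{paralelslicings} they define a $\{1,\dots,k\}$-slicing $C_1=\{C_{1,1},\dots,C_{1,k}\}$ with $C_{1,l}=\{v\colon d_H(x,v)\in\{2l-2,\,2l-1\}\}$ for $l\le k-1$ and $C_{1,k}=\{v\colon d_H(x,v)\ge 2k-2\}$. In particular $\bigcup_{l=1}^{k-1}C_{1,l}=B_x(2k-3)$, as required; note $C_{1,1}=B_x(1)$.

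Next I would compute the rank. The matrix $A_{C_{1,l}}$ has a nonzero row only at the vertices $v\in C_{1,l}$, and that row is $(A^{(\Xcal)}_v)^{\top}$; hence, after grouping the rows of $A_{C_1}=(A_{C_{1,1}}\mid\cdots\mid A_{C_{1,k-1}})$ according to the cell they lie in, $A_{C_1}$ is block-diagonal with diagonal blocks $M_l:=\big((A^{(\Xcal)}_v)^{\top}\big)_{v\in C_{1,l}}$, all remaining rows being zero. Thus $\operatorname{rank}(A_{C_1})=\sum_{l=1}^{k-1}\dim\operatorname{span}\{A^{(\Xcal)}_v\colon v\in C_{1,l}\}$, and it suffices to show each of these $k-1$ spans is all of $\R^{d_\Xcal}$, where $d_\Xcal=1+\sum_{i\in[n]}(|\Xcal_i|-1)$. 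This also gives $(k-1)d_\Xcal\le|B_x(2k-3)|\le|\Xcal|$ for free, so ``full rank'' here is genuine full column rank.

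For $l=1$ the cell $C_{1,1}=B_x(1)$ is a radius-one Hamming ball, whose $d_\Xcal$ vectors $A^{(\Xcal)}_v$ are affinely independent (the observation recalled just before the corollary) and, each having constant-row entry $1$, linearly independent; so they span $\R^{d_\Xcal}$. For $2\le l\le k-1$ put $a=2l-2$ and $a+1=2l-1$; both lie strictly between $0$ and $n$, using $2l-1\le 2k-3\le n$. By Lemma~\ref{lem:lemma25} the affine hull of the distance-$a$ layer has dimension $d_\Xcal-2$, and since that layer avoids the origin (its constant-row entry is $\equiv 1$) its linear span $W_a$ is a hyperplane; as $W_a$ contains the layer, $W_a=\{z\colon\langle\mathds{1},z\rangle=(a+1)z_0\}$, where $z_0$ denotes the constant-row entry of $z$. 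These hyperplanes are distinct for distinct $a$, so $W_{2l-2}+W_{2l-1}=\R^{d_\Xcal}$, whence $\operatorname{span}\{A^{(\Xcal)}_v\colon v\in C_{1,l}\}=\R^{d_\Xcal}$. Summing over $l$ gives $\operatorname{rank}(A_{C_1})=(k-1)d_\Xcal$.

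The step I expect to fuss over is the boundary case $2k-3=n$: then the top cell $C_{1,k-1}$ involves the distance-$n$ layer, to which Lemma~\ref{lem:lemma25} does not apply, and $C_{1,k}=\emptyset$ (so one should also observe that a slicing is permitted to have an empty cell, equivalently a redundant hyperplane). There I would argue instead that $C_{1,k-1}$ still contains the distance-$(n-1)$ layer, whose span $W_{n-1}$ is a hyperplane, together with any single vertex $z$ of the distance-$n$ layer (nonempty once every $|\Xcal_i|\ge 2$, which we may assume); for that vertex $\langle\mathds{1},z\rangle-n z_0=(n+1)-n=1\ne 0$, so $z\notin W_{n-1}$ and the span is again $\R^{d_\Xcal}$. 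Everything else is routine bookkeeping about Hamming layers and block matrices.
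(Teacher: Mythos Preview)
Your proof is correct and follows exactly the approach the paper intends: the paper records the corollary as an immediate consequence of Lemma~\ref{lem:lemma25} together with the observation (stated just before the corollary) that the $d_\Xcal$ vectors coming from a radius-one Hamming ball are affinely independent. You have simply made this explicit---choosing the slicing hyperplanes $\langle\mathds{1},z\rangle=2l+\tfrac12$, reading off the block-diagonal structure of $A_{C_1}$, identifying each Hamming layer's linear span with the hyperplane $W_a=\{z:\langle\mathds{1},z\rangle=(a+1)z_0\}$ via Lemma~\ref{lem:lemma25}, and handling the edge case $2k-3=n$ where the top layer falls outside the hypotheses of that lemma.
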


Recall that $\Amd_\Xcal(d)$ denotes the maximal cardinality of a subset of $\Xcal$ of minimum Hamming distance at least $d$. 
When $\Xcal=\{0,1,\ldots, q-1\}^n$ we write  $\Amd_q(n,d)$. 
Let $\Kmd_\Xcal(d)$ denote the minimal cardinality of a subset of $\Xcal$ with covering radius $d$.

\begin{proposition}[Binary visible units]\label{corbinhi}
Let $\Xcal=\{0,1\}^n$ and $|\Ycal_j|=s_j$ for all $j\in[m]$. 
If $\Xcal$ contains $m$ disjoint Hamming balls of radii $2s_j-3$, $j\in[m]$ whose complement has full rank, 
then $\RBM_{\Xcal,\Ycal}^{\text{\normalfont tropical}}$ has the expected dimension, $\min\{\sum_{j\in[m]} (s_j-1)(n+1) +n , 2^n-1\}$. 
\end{proposition}
In particular, if $\Xcal=\{0,1\}^n$ and $\Ycal=\{0,1,\ldots,s-1\}^m$ with  $m < \Amd_2(n,d)$ and $d=4(s-1)-1$, then $\RBM_{\Xcal,\Ycal}$ has the expected dimension. 
It is known that $\Amd_2(n,d)\geq 2^{n-\lceil \log_2(\sum_{j=0}^{d-2} {n-1\choose j})\rceil}$.

\begin{proposition}[Binary hidden units]\label{binhidcor}
Let $\Ycal = \{0,1\}^m$ and $\Xcal$ be arbitrary.  
\begin{itemize}
\item If $m+1\leq\Amd_\Xcal(3)$, then $\RBM_{\Xcal,\{0,1\}^m}^{\text{\normalfont tropical}}$ has dimension $(1+m)(1+\sum_{i\in[n]}(|\Xcal_i|-1)) -1$. 
\item If $m+1\geq\Kmd_\Xcal(1)$, then $\RBM_{\Xcal,\{0,1\}^m}^{\text{\normalfont tropical}}$ has dimension $|\Xcal|-1$. 
\end{itemize}

\noindent
Let $\Ycal=\{0,1\}^m$ and $\Xcal=\{0,1,\ldots,q-1\}^n$, where $q$ is a prime power.  
\begin{itemize}
\item 
If $m+1\leq q^{n-\left\lceil\log_q (1+ (n-1)(q-1) +1)\right\rceil}$, then $\RBM_{\Xcal,\Ycal}^{\text{\normalfont tropical}}$ has dimension \newline $(1+m)(1+\sum_{i\in[n]}(|\Xcal_i|-1)) -1$. 

\item  
If $n=(q^r-1)/(q-1)$ for some $r\geq2$, then $\Acal_\Xcal(3)=\Kmd_\Xcal(1)$, and $\RBM_{\Xcal,\Ycal}^{\text{\normalfont tropical}}$ has the expected dimension for any $m$. 
\end{itemize}
\end{proposition}
In particular, when all units are binary and $m< 2^{n-\lceil\log_2(n+1) \rceil}$, then $\RBM_{\Xcal,\Ycal}$ has the expected dimension; this was shown in~\citep{Cueto2010}.

\begin{proposition}[Arbitrary sized units]\label{generalcasecor}
If $\Xcal$ contains $m$ disjoint Hamming balls of radii $ 2|\Ycal_1|-3,\ldots, 2|\Ycal_m|-3$, and the complement of their union has full rank, then 
$\RBM^{\text{\normalfont tropical}}_{\Xcal,\Ycal}$ has the expected dimension. 
\end{proposition}

\begin{proof}\,  
Propositions \ref{corbinhi}, \ref{binhidcor}, and \ref{generalcasecor} follow from Theorem \ref{tropicalRBM} and Corollary \ref{lem:lemma24} together with the following explicit bounds on $\Amd$ 
by~\citep[][]{Gilbert:1952,Varshamov:1957}:  
$$\Amd_q(n,d)\geq \frac{q^n}{\sum_{j=0}^{d-1}{n\choose j} (q-1)^j}.$$ If $q$ is a prime power, then $\Amd_q(n,d)\geq  q^k$, where $k$ is the largest integer with $q^k<\frac{q^n}{\sum_{j=0}^{d-2}{n-1\choose j}(q-1)^j}$. 
In particular, 
$\Amd_2(n,3)\geq 2^k$, where $k$ is the largest integer with $2^k<\frac{2^n}{(n-1)+1}=2^{n-\log_2(n)}$, i.e., $k=n-\lceil \log_2(n+1)\rceil$. 
\end{proof}

\begin{example}
\begin{normalfont}
Many results in coding theory can now be translated directly to statements about the dimension of discrete RBMs.  
Here is an example.  
Let $\Xcal=\{1,2,\ldots,s\}\times\{1,2,\ldots,s\}\times\{1,2,\ldots,t\}$, $s\leq t$. 
The minimum cardinality of a code $C\subseteq\Xcal$ with covering-radius one equals $\Kmd_\Xcal(1) = s^2-\left\lfloor \frac{(3 s -t)^2}{8}\right\rfloor$ if $t\leq 3s$, and $\Kmd_\Xcal(1) = s^2$ otherwise~\citep[see][Theorem~3.7.4]{cohen2005covering}. 
Hence $\TRBM_{\Xcal,\{0,1\}^m}$ has dimension $|\Xcal|-1$ when $m+1\geq s^2-\left\lfloor \frac{(3 s -t)^2}{8}\right\rfloor$ and $t\leq 3 s$, and when $m+1\geq s^2$ and $t>3s$. 
\end{normalfont}
\end{example}

\section{Discussion}\label{section:discussion}

In this note we study the representational power of RBMs with discrete units. 
Our results generalize a diversity of previously known results for standard binary RBMs and na\"ive Bayes models.  They help contrasting the geometric-combinatorial properties of distributed products of experts versus non-distributed mixtures of experts. 

We estimate the number of hidden units for which discrete RBM models can approximate any distribution to any desired accuracy, depending on the cardinalities of their units' state spaces. 
This analysis shows that the maximal approximation error increases at most logarithmically with the total number of visible states and decreases at least logarithmically with the sum of the number of states of the hidden units. This observation could be helpful, for example, in designing a penalty term to allow comparison of models with differing numbers of units. 
It is worth mentioning that the submodels of discrete RBMs described in Theorem~\ref{corpropouniv} can be used not only to estimate the maximal model approximation errors, but also the expected model approximation errors given a prior of target distributions on the probability simplex. 
See~\citep{MonRauh2012} for an exact analysis of Dirichlet priors. 
In future work it would be interesting to study the statistical approximation errors of discrete RBMs and to complement the theory by an empirical evaluation.

The combinatorics of tropical discrete RBMs allows us to relate the dimension of discrete RBM models to the solutions of linear optimization problems and slicings of convex support polytopes by normal fans of simplices. 
We use this to show that the model $\RBM_{\Xcal,\Ycal}$ has the expected dimension for many choices of $\Xcal$ and $\Ycal$, but not for all choices.  
We based our explicit computations of the dimension of RBMs on slicings by collections of parallel hyperplanes, but more general classes of slicings may be considered. 
The same tools presented in this paper can be used to estimate the dimension of a general class of models involving interactions within layers, defined as Kronecker products of hierarchical models~\citep[see][]{MontufarMorton2013b}. 
We think that the geometric-combinatorial picture of discrete RBMs developed in this paper may be helpful in solving various long standing theoretical problems in the future, for example: 
What is the exact dimension of na\"ive Bayes models with general discrete variables? 
What is the smallest number of hidden variables that make an RBM a universal approximator? 
Do binary RBMs always have the expected dimension?

\subsubsection*{Acknowledgments}
We are grateful to the ICLR 2013 community for very valuable comments. 
This work was accomplished in part at the Max Planck Institute for Mathematics in the Sciences. 
This work is supported in part by DARPA grant FA8650-11-1-7145.

\bibliographystyle{abbrvnat}
\bibliography{referenzen}{}

\begin{thebibliography}{39}
\providecommand{\natexlab}[1]{#1}
\providecommand{\url}[1]{\texttt{#1}}
\expandafter\ifx\csname urlstyle\endcsname\relax
  \providecommand{\doi}[1]{doi: #1}\else
  \providecommand{\doi}{doi: \begingroup \urlstyle{rm}\Url}\fi

\bibitem[Aoyagi(2010)]{aoyagi:2010}
M.~Aoyagi.
\newblock Stochastic complexity and generalization error of a {R}estricted
  {B}oltzmann {M}achine in {B}ayesian estimation.
\newblock \emph{J. Mach. Learn. Res.}, 99:\penalty0 1243--1272, August 2010.

\bibitem[Ay and Knauf(2006)]{AyKnauf06:Maximizing_Multiinformation}
N.~Ay and A.~Knauf.
\newblock Maximizing multi-information.
\newblock \emph{Kybernetika}, 42\penalty0 (5):\penalty0 517--538, 2006.

\bibitem[Bengio(2009)]{Bengio-2009}
Y.~Bengio.
\newblock Learning deep architectures for {AI}.
\newblock \emph{Found. Trends Mach. Learn.}, 2\penalty0 (1):\penalty0 1--127,
  2009.

\bibitem[Bengio et~al.(2007)Bengio, Lamblin, Popovici, and
  Larochelle]{Bengio2007}
Y.~Bengio, P.~Lamblin, D.~Popovici, and H.~Larochelle.
\newblock Greedy layer-wise training of deep networks.
\newblock In B.~Sch\"{o}lkopf, J.~Platt, and T.~Hoffman, editors,
  \emph{Advances in Neural Information Processing Systems 19}, pages 153--160.
  MIT Press, Cambridge, MA, 2007.

\bibitem[Carreira-Perpi\~nan and Hinton(2005)]{Carreira2005}
M.~A. Carreira-Perpi\~nan and G.~E. Hinton.
\newblock On contrastive divergence learning.
\newblock In \emph{Proceedings of the 10-th Interantional Workshop on
  Artificial Intelligence and Statistics}, 2005.

\bibitem[Catalisano et~al.(2011)Catalisano, Geramita, and
  Gimigliano]{Catalisano2011}
M.~V. Catalisano, A.~V. Geramita, and A.~Gimigliano.
\newblock Secant varieties of $\mathbb{P}^1\times\dots\times\mathbb{P}^1$
  ($n$-times) are not defective for $n\geq5$.
\newblock \emph{J. Algebraic Geometry}, 20:\penalty0 295--327, 2011.

\bibitem[Cohen et~al.(2005)Cohen, Honkala, Litsyn, and
  Lobstein]{cohen2005covering}
G.~Cohen, I.~Honkala, S.~Litsyn, and A.~Lobstein.
\newblock \emph{Covering Codes}.
\newblock North-Holland Mathematical Library. Elsevier Science, 2005.

\bibitem[Cueto et~al.(2010)Cueto, Morton, and Sturmfels]{Cueto2010}
M.~A. Cueto, J.~Morton, and B.~Sturmfels.
\newblock Geometry of the restricted {B}oltzmann machine.
\newblock In M.~Viana and H.~Wynn, editors, \emph{Algebraic methods in
  statistics and probability II, AMS Special Session}, volume~2. American
  Mathematical Society, 2010.

\bibitem[Dahl et~al.(2012)Dahl, Adams, and Larochelle]{wordrbm}
G.~E. Dahl, R.~P. Adams, and H.~Larochelle.
\newblock Training restricted {B}oltzmann machines on word observations.
\newblock \emph{arXiv:1202.5695}, 2012.

\bibitem[Dempster et~al.(1977)Dempster, Laird, and Rubin]{dempsterEM}
A.~P. Dempster, N.~M. Laird, and D.~B. Rubin.
\newblock Maximum likelihood from incomplete data via the {EM} algorithm.
\newblock \emph{Journal of the Royal Statistical Society. Series B
  (Methodological)}, 39\penalty0 (1):\penalty0 1--38, 1977.

\bibitem[Draisma(2008)]{Draisma}
J.~Draisma.
\newblock A tropical approach to secant dimensions.
\newblock \emph{J. Pure Appl. Algebra}, 212\penalty0 (2):\penalty0 349--363,
  2008.

\bibitem[Freund and Haussler(1991)]{Freund1992}
Y.~Freund and D.~Haussler.
\newblock Unsupervised learning of distributions of binary vectors using
  2-layer networks.
\newblock In J.~E. Moody, S.~J. Hanson, and R.~Lippmann, editors,
  \emph{Advances in Neural Information Processing Systems 4}, pages 912--919.
  Morgan Kaufmann, 1991.

\bibitem[Gilbert(1952)]{Gilbert:1952}
E.~N. Gilbert.
\newblock A comparison of signalling alphabets.
\newblock \emph{Bell System Technical Journal}, 31:\penalty0 504--522, 1952.

\bibitem[Hinton(1999)]{Hinton99productsof}
G.~E. Hinton.
\newblock Products of experts.
\newblock In \emph{Proceedings 9-th ICANN}, volume~1, pages 1--6, 1999.

\bibitem[Hinton(2002)]{Hinton2002}
G.~E. Hinton.
\newblock Training products of experts by minimizing contrastive divergence.
\newblock \emph{Neural Computation}, 14:\penalty0 1771--1800, 2002.

\bibitem[Hinton(2010)]{Hinton2010}
G.~E. Hinton.
\newblock A practical guide to training restricted {B}oltzmann machines,
  version 1.
\newblock Technical report, UTML2010-003, University of Toronto, 2010.

\bibitem[Hinton et~al.(2006)Hinton, Osindero, and Teh]{Hinton2006}
G.~E. Hinton, S.~Osindero, and Y.~Teh.
\newblock A fast learning algorithm for deep belief nets.
\newblock \emph{Neural Computation}, 18:\penalty0 1527--1554, 2006.

\bibitem[Le~Roux and Bengio(2008)]{LeRoux2008}
N.~Le~Roux and Y.~Bengio.
\newblock Representational power of restricted {B}oltzmann machines and deep
  belief networks.
\newblock \emph{Neural Computation}, 20\penalty0 (6):\penalty0 1631--1649,
  2008.

\bibitem[Long and Servedio(2010)]{LongServedio10}
P.~M. Long and R.~A. Servedio.
\newblock Restricted {B}oltzmann machines are hard to approximately evaluate or
  simulate.
\newblock In J.~F{\"u}rnkranz and T.~Joachims, editors, \emph{Proceedings of
  the 27th International Conference on Machine Learning (ICML-10)}, pages
  703--710. Omnipress, 2010.

\bibitem[Lowd and Domingos(2005)]{Lowd05naivebayes}
D.~Lowd and P.~Domingos.
\newblock Naive {B}ayes models for probability estimation.
\newblock In \emph{Proceedings of the 22nd International Conference on Machine
  Learning}, pages 529--536. ACM Press, 2005.

\bibitem[Marks and Movellan(2001)]{Marks01diffusionnetworks}
T.~K. Marks and J.~R. Movellan.
\newblock Diffusion networks, products of experts, and factor analysis.
\newblock In \emph{Proc. 3rd Int. Conf. Independent Component Anal. Signal
  Separation}, pages 481--485, 2001.

\bibitem[Mat\'u\v{s} and
  Ay(2003)]{MatusAy03:On_Maximization_of_the_Information_Divergence}
F.~Mat\'u\v{s} and N.~Ay.
\newblock On maximization of the information divergence from an exponential
  family.
\newblock In \emph{Proceedings of the WUPES'03}, pages 199--204. University of
  Economics, Prague, 2003.

\bibitem[Memisevic and Hinton(2010)]{Memisevic2010}
R.~Memisevic and G.~E. Hinton.
\newblock Learning to represent spatial transformations with factored
  higher-order {B}oltzmann machines.
\newblock \emph{Neural Computation}, 22\penalty0 (6):\penalty0 1473--1492, June
  2010.

\bibitem[Mont{\'u}far(2013)]{Montufar2010a}
G.~Mont{\'u}far.
\newblock Mixture decompositions of exponential families using a decomposition
  of their sample spaces.
\newblock \emph{Kybernetika}, 49\penalty0 (1), 2013.

\bibitem[Mont{\'u}far and Ay(2011)]{Montufar2011}
G.~Mont{\'u}far and N.~Ay.
\newblock Refinements of universal approximation results for deep belief
  networks and restricted {B}oltzmann machines.
\newblock \emph{Neural Computation}, 23\penalty0 (5):\penalty0 1306--1319,
  2011.

\bibitem[Mont\'ufar and Morton(2012)]{MontufarMorton2012}
G.~Mont\'ufar and J.~Morton.
\newblock When does a mixture of products contain a product of mixtures?
\newblock \emph{\url{http://arxiv.org/abs/1206.0387}}, 2012.

\bibitem[Mont\'ufar and Morton(2013)]{MontufarMorton2013b}
G.~Mont\'ufar and J.~Morton.
\newblock Geometry of hierarchical models on hidden-visible products of
  simplicial complexes.
\newblock 2013.
\newblock In preparation.

\bibitem[Mont{\'u}far and Rauh(2012)]{MonRauh2012}
G.~Mont{\'u}far and J.~Rauh.
\newblock Scaling of model approximation errors and expected entropy distances.
\newblock In \emph{Proc. of the $9$th Workshop on Uncertainty Processing (WUPES
  2012)}, pages 137--148, 2012.
\newblock Preprint available at \url{http://arxiv.org/abs/1207.3399}.

\bibitem[Mont\'ufar et~al.(2011)Mont\'ufar, Rauh, and Ay]{NIPS2011_0307}
G.~Mont\'ufar, J.~Rauh, and N.~Ay.
\newblock Expressive power and approximation errors of restricted {B}oltzmann
  machines.
\newblock In J.~Shawe-Taylor, R.~Zemel, P.~Bartlett, F.~C.~N. Pereira, and
  K.~Q. Weinberger, editors, \emph{Advances in Neural Information Processing
  Systems 24}, pages 415--423, 2011.

\bibitem[Osindero and Hinton(2008)]{DBLP:conf/nips/OsinderoH07}
S.~Osindero and G.~E. Hinton.
\newblock Modeling image patches with a directed hierarchy of {M}arkov random
  fields.
\newblock In J.~C. Platt, D.~Koller, Y.~Singer, and S.~Roweis, editors,
  \emph{Advances in Neural Information Processing Systems 20}, pages
  1121--1128. MIT Press, Cambridge, MA, 2008.

\bibitem[Pachter and Sturmfels(2004)]{tropical}
L.~Pachter and B.~Sturmfels.
\newblock {Tropical geometry of statistical models}.
\newblock \emph{Proceedings of the National Academy of Sciences of the United
  States of America}, 101\penalty0 (46):\penalty0 16132--16137, Nov. 2004.

\bibitem[Ranzato et~al.(2010)Ranzato, Krizhevsky, and Hinton]{RanzatoKH10}
M.~Ranzato, A.~Krizhevsky, and G.~E. Hinton.
\newblock {Factored 3-Way Restricted Boltzmann Machines For Modeling Natural
  Images}.
\newblock In \emph{Proc. Thirteenth International Conference on Artificial
  Intelligence and Statistics (AISTATS)}, pages 621--628, 2010.

\bibitem[Roth(1934)]{Roth1934}
W.~E. Roth.
\newblock On direct product matrices.
\newblock \emph{Bulletin of the American Mathematical Society}, 40:\penalty0
  461--468, 1934.

\bibitem[Salakhutdinov et~al.(2007)Salakhutdinov, Mnih, and
  Hinton]{Salakhutdinov:2007}
R.~Salakhutdinov, A.~Mnih, and G.~E. Hinton.
\newblock Restricted {B}oltzmann machines for collaborative filtering.
\newblock In \emph{Proceedings of the 24th International Conference on Machine
  Learning}, pages 791--798, 2007.

\bibitem[Sejnowski(1986)]{Sejnowski86higher-orderboltzmann}
T.~J. Sejnowski.
\newblock Higher-order {B}oltzmann machines.
\newblock In \emph{Neural Networks for Computing}, pages 398--403. American
  Institute of Physics, 1986.

\bibitem[Smolensky(1986)]{Smolensky1986}
P.~Smolensky.
\newblock Information processing in dynamical systems: foundations of harmony
  theory.
\newblock In \emph{Symposium on Parallel and Distributed Processing}, 1986.

\bibitem[Tran et~al.(2011)Tran, Phung, and Venkatesh]{mixvarrbm}
T.~Tran, D.~Phung, and S.~Venkatesh.
\newblock Mixed-variate restricted {B}oltzmann machines.
\newblock In \emph{Proc. of 3rd Asian Conference on Machine Learning (ACML)},
  pages 213--229, 2011.

\bibitem[Varshamov(1957)]{Varshamov:1957}
R.~R. Varshamov.
\newblock Estimate of the number of signals in error correcting codes.
\newblock \emph{Doklady Akad. Nauk SSSR}, 117:\penalty0 739--741, 1957.

\bibitem[{Welling} et~al.(2005){Welling}, {Rosen-Zvi}, and
  {Hinton}]{welling:exponential}
M.~{Welling}, M.~{Rosen-Zvi}, and G.~E. {Hinton}.
\newblock Exponential family harmoniums with an application to information
  retrieval.
\newblock In L.~K. Saul, Y.~Weiss, and L.~Bottou, editors, \emph{Advances in
  Neural Information Processing Systems 17}, pages 1481--1488. MIT Press,
  Cambridge, MA, 2005.

\end{thebibliography}

\end{document}